\newcommand{\Scon}{\mathbb{S}}
\newcommand{\Tcon}{\mathbb{T}}
\newcommand{\Ocon}{\mathbb{O}}
\newcommand{\Sh}{\mathfrak{S}}
\newcommand{\SH}{\underline{\mathfrak{S}}}
\DeclareMathOperator{\id}{id}
\newcommand*{\logeq}{\ratio\Leftrightarrow}
\DeclareMathOperator{\Ext}{Ext}
\DeclareMathOperator{\Int}{Int}
\DeclareMathOperator{\app}{\mid\,}
\DeclareMathOperator{\Var}{Var}
\newtheorem{theorem}{Theorem}
\newtheorem{lemma}[theorem]{Lemma}
\newtheorem{definition}[theorem]{Definition}
\newproof{proof}{Proof}
\newtheorem{proposition}[theorem]{Proposition}
\newtheorem{problem}[theorem]{Problem}
\newtheorem{corollary}[theorem]{Corollary}
\let\cref\Cref
\begin{document}
\let\WriteBookmarks\relax
\def\floatpagepagefraction{1}
\def\textpagefraction{.001}
\begin{frontmatter}

\title
{On the Lattice of Conceptual Measurements}

\author[1,2]%
{Tom Hanika}
\ead{tom.hanika@cs.uni-kassel.de}

\author[1,2]{Johannes Hirth}
\ead{hirth@cs.uni-kassel.de}


\address[1]{Knowledge \& Data Engineering Group, University of Kassel, Germany}
\address[2]{Interdisciplinary Research Center for Information System Design,
  U Kassel, Germany}

\nonumnote{Authors are given in alphabetical order.
  No priority in authorship is implied.}

\begin{abstract}
  We present a novel approach for data set scaling based on
  scale-measures from formal concept analysis, i.e., continuous maps
  between closure systems, and derive a canonical
  representation. Moreover, we prove said scale-measures are lattice
  ordered with respect to the closure systems. This enables exploring
  the set of scale-measures through by the use of meet and join
  operations. Furthermore we show that the lattice of scale-measures
  is isomorphic to the lattice of sub-closure systems that arises from
  the original data. Finally, we provide another representation of
  scale-measures using propositional logic in terms of data set
  features. Our theoretical findings are discussed by means of
  examples.
\end{abstract}

\begin{keyword}
  FCA\sep Measurements\sep Data~Scaling\sep Lattice\sep
  Closure System 
\end{keyword}

\end{frontmatter}

\section{Introduction}
The discovery and analysis of patterns and dependencies in the realm
of data science does strongly depend on the measurement of the data.
Each data set is subject to one or more scales of
measure~\cite{stevens1946theory}, i.e., maps from the data into
variable of some (mathematical) space, e.g., the real line, an ordered
set, etc. Beyond that, almost every data set is further scaled prior
to (data)processing to meet the requirements of the employed data
analysis method, such as the introduction of artificial metrics, the
numerical representation of nominal features, etc. This scaling is
usually accompanied by a grade of detail, which in turn is becoming
more and more of a problem for data science tasks as the availability
of features increases and their human explainability decreases. Often
used methods to deal with this problem from the field of machine
learning, such as \emph{principal component analysis}, do enforce
particular, possible inapt, levels of measurement, e.g., food tastes
represented by real numbers, and amplify the problem for
explainability.

Therefore, understanding the set of possible scaling maps, identifying
its (algebraic) properties, and deriving to some extent human
explainable control over it, is a pressing problem. This is especially
important since found patterns and dependencies may be artifacts of
some scaling map and may therefore corrupt any subsequent task,e.g.,
classification tasks.
In the case Boolean data sets the field of \emph{formal concept
  analysis} provides a well-formalized, yet insufficiently studied,
approach for mathematically grasping the process of data scaling,
called \emph{scale-measure} maps. These maps are continuous with
respect to the closures systems that emerge from the original Boolean
data set and scale, which resembles also a Boolean data set, i.e., the
preimage of a closed set is closed. Equipped with this notion for data
scaling we discover and characterize consistent scale-refinements and
derive a theory that is able to provide new insights to data sets by
comparing different scale-measures. Building up on this we prove that 
the set of all scale-measures bears a lattice structure and we show
how to transform scale-measures using lattice operations. Moreover, we
introduce an equivalent representation of scale-measures using
propositional logic expressions and how they emerge naturally while
scaling data. 

Altogether, we present methods that are able to generate different
conceptual measurements of a data set by computing meaningful features
such that they are consistent with the conceptual knowledge of the
original data set.

\section{Scales and Measurement}\label{smeasures}
\subsection{Measurements and Categorical Data}
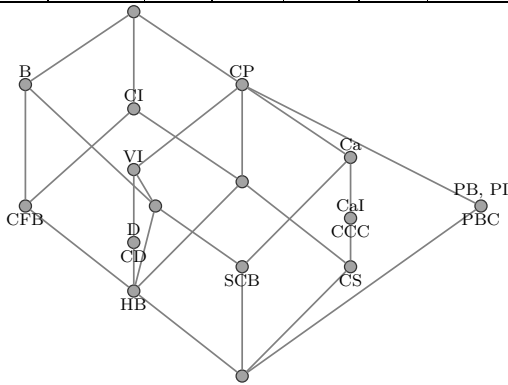
\begin{figure}[t]
  \label{bjice}
  \centering
    \scalebox{0.55}{
      \hspace{-1.4cm}
      \begin{cxt}
        \cxtName{}
        \att{\shortstack{Brownie\\ (B)}}
        \att{\shortstack{Peanut\\ Butter (PB)}}
        \att{\shortstack{Peanut\\ Ice (PI)}}
        \att{\shortstack{Caramel\\ (Ca)}}
        \att{\shortstack{Caramel\\ Ice (CaI)}}
        \att{\shortstack{Choco\\ Ice (CI)}}
        \att{\shortstack{Choco\\ Pieces (CP)}}
        \att{\shortstack{Dough\\ (D)}}
        \att{\shortstack{Vanilla\\ (V)}}
        \obj{x....x...}{\shortstack{Fudge Brownie (FB)\\ \ }}
        \obj{......xxx}{\shortstack{Cookie Dough (CD)\\ \ }}
        \obj{x....xxxx}{\shortstack{Half Baked (HB)\\ \ }}
        \obj{...xxxx..}{\shortstack{Caramel Sutra (CS)\\ \ }}
        \obj{...xx.x..}{\shortstack{Caramel Chew\\ Chew (CCC)}}
        \obj{.xx...x..}{\shortstack{Peanut Butter\\ Cup (PBC)}}
        \obj{x..x..x.x}{\shortstack{Salted Caramel\\ Brownie (SCB)}}
      \end{cxt}}  

    \scalebox{0.8}{\colorlet{mivertexcolor}{black!80}
\colorlet{jivertexcolor}{black!80}
\colorlet{vertexcolor}{black!80}
\colorlet{bordercolor}{black!80}
\colorlet{linecolor}{gray}
\tikzset{vertexbase/.style={semithick, shape=circle, inner sep=2pt, outer sep=0pt, draw=bordercolor},%
  vertex/.style={vertexbase, fill=vertexcolor!45},%
  mivertex/.style={vertexbase, fill=mivertexcolor!45},%
  jivertex/.style={vertexbase, fill=jivertexcolor!45},%
  divertex/.style={vertexbase, top color=mivertexcolor!45, bottom color=jivertexcolor!45},%
  conn/.style={-, thick, color=linecolor}%
}
\tikzstyle{n} = [text width=2.5cm,align=center]
\begin{tikzpicture}[scale=0.35,font=\footnotesize]
  \begin{scope} 
    \begin{scope} 
      \foreach \nodename/\nodetype/\xpos/\ypos in {%
        0/vertex/1.6624628793523328/5.158005322212052,
        1/jivertex/-3.428124546751512/9.18431467431844,
        2/jivertex/1.6624628793523328/10.334688774920266,
        3/mivertex/6.7530503054561795/10.334688774920266,
        4/vertex/-3.428124546751512/11.485062875522091,
        5/mivertex/6.7530503054561795/12.635436976123916,
        6/jivertex/-8.518711972855357/13.21062402642483,
        7/vertex/-2.4100070615307425/13.21062402642483,
        8/vertex/12.879872702001563/13.21062402642483,
        9/vertex/1.6624628793523328/14.360998127026654,
        10/vertex/-3.428124546751512/14.936185177327568,
        11/vertex/6.7530503054561795/15.51137222762848,
        12/vertex/-3.428124546751512/17.81212042883213,
        13/vertex/-8.518711972855357/18.962494529433954,
        14/vertex/1.6624628793523328/18.962494529433954,
        15/vertex/-3.428124546751512/22.41361683123943
      } \node[\nodetype] (\nodename) at (\xpos, \ypos) {};
    \end{scope}
    \begin{scope} 
      \path (3) edge[conn] (5);
      \path (13) edge[conn] (15);
      \path (9) edge[conn] (14);
      \path (6) edge[conn] (13);
      \path (10) edge[conn] (14);
      \path (0) edge[conn] (1);
      \path (14) edge[conn] (15);
      \path (5) edge[conn] (11);
      \path (4) edge[conn] (10);
      \path (11) edge[conn] (14);
      \path (9) edge[conn] (12);
      \path (7) edge[conn] (13);
      \path (1) edge[conn] (6);
      \path (8) edge[conn] (14);
      \path (0) edge[conn] (2);
      \path (3) edge[conn] (9);
      \path (2) edge[conn] (11);
      \path (1) edge[conn] (9);
      \path (0) edge[conn] (8);
      \path (6) edge[conn] (12);
      \path (2) edge[conn] (7);
      \path (12) edge[conn] (15);
      \path (7) edge[conn] (10);
      \path (1) edge[conn] (4);
      \path (0) edge[conn] (3);
      \path (1) edge[conn] (7);
    \end{scope}
    \begin{scope} 
      \foreach \nodename/\labelpos/\labelopts/\labelcontent in {%
        1/below/n/{HB},
        2/below/n/{SCB},
        3/below/n/{CS},
        4/below/n/{CD},
        4/above/n/{D},
        5/below/n/{CCC},
        5/above/n/{CaI},
        6/below/n/{CFB},
        8/below/n/{PBC},
        8/above/n/{PB, PI},
        10/above/n/{VI},
        11/above/n/{Ca},
        12/above/n/{CI},
        13/above/n/{B},
        14/above/n/{CP}
      } \coordinate[label={[\labelopts]\labelpos:{\labelcontent}}](c) at (\nodename);
    \end{scope}
  \end{scope}
\end{tikzpicture}}



  \caption{This Figure shows a Ben and Jerry's context and its concept
    lattice.}
  \label{fig:bj1}
\end{figure}

Formalizing and understanding the process of \emph{measurement} is, in
particular in data science, an ongoing
discussion. \emph{Representational Theory of Measurement}
(RTM)~\cite{suppes1989foundations,luce1990foundations} reflects the
most recent and widely acknowledged current standpoint on this. RTM
relies on homomorphisms from an (empirical) relational structure
$\mathbf{E}=(E,(R_i)_{i\in I})$ to a numerical relational structure
$\mathbf{B}=(B,(S_i)_{i\in I})$, very well explained by
J.~Pfanzagl~\cite{pfanzagl1971theory}, where $B$ is often chosen to be
the real line $\mathbb{R}$ or a $n$ dimensional vector space on
it. However, it might be beneficial to allow for other, more algebraic
(measurement) structures~\cite[p. 253]{roberts1984measurement}. This
is particularly true in cases where the empirical data does not allow
for a meaningful measurement into the \emph{ratio} level
(cf~\cite{stevens1946theory}), e.g., taxonomic ranks in biology or
types of faults in software engineering. Both examples are instances
of \emph{categorical data}, which is classified to the \emph{nominal
  level} with respect to S.~S.~Stevens~\cite{stevens1946theory}. If
such data is also naturally equipped with an rank order relation,
e.g., the Likert scale or school grades, it is situated on the
\emph{ordinal level}.

A mathematical framework well equipped for the nominal as well as
the ordinal level is formal concept analysis (FCA)~\cite{Wille1982,
  fca-book}. In FCA we represent data in the form of \emph{formal
  contexts} as see~\cref{fig:bj1} (top). A formal context is a triple
$(G,M,I)$ with $G$ beeing a finite set of object, $M$ beeing a finite
set of attributes and $I \subseteq G \times M$ an incidence relation
between them. With $(g,m) \in I$ means that object $g$ has attribute
$m$. We visualize formal context using cross tables, as depicted for
the running example \emph{Ben and Jerry's} in~\cref{bjice} (top). A
cross in the table indicates that an object (ice cream flavor) has an
attribute (ice cream ingredient).  A context $\Scon=(H,N,J)$ is called
an \emph{induced sub-context} of $\context$, if
$H\subseteq G, N\subseteq M$ and $I_\Scon=I\cap (H_\Scon \times N)$,
denoted $\Scon \leq \context$. The incidence relation gives rise to
two derivation operators. The first is the derivation of an attribute
$A \subseteq M$ where
$A'=\{g\in G \mid \forall m \in A: (g,m)\in I\}$. The object
derivation $B'$ for $B\subseteq G$ is defined analogously. The
consecutive application of the two derivation operators on an
attribute set (object set) constitutes a \emph{closure operators},
i.e., a idempotent, monotone, and extensive, map. Therefore, the pairs
$(G,'')$ and $(M,'')$ are \emph{closure spaces} with
$\cdot'': \mathcal{P}(G)\to\mathcal{P}(G)$ and
$\cdot'': \mathcal{P}(M)\to\mathcal{P}(M)$. For example,
$\{\text{Dough, Vanilla}\}''=\{\text{Choco, Dough, Vanilla}\}$
in~\cref{bjice}.

A formal concept is a pair
$(A,B) \in \mathcal{P}(G)\times \mathcal{P}(M)$ with $A'=B$ and
$A = B'$, where $A$ is called \emph{extent} and $B$ \emph{intent}. We
denote with $\Ext(\context)$ and $\Int(\context)$ the sets of all
extents and intents, respectively. Each of these sets forms a closure
system associated to the closure operator on the respective base set,
i.e., the object set or the attribute set. Both closure systems are
represented in the \emph{(concept) lattice}
$\BV(\context)=(\mathcal{B}(\context),\subseteq)$, where
$\mathcal{B}(\context)$ denotes the set of all concepts in $\context$
and  for $(A,B), (C,D)\in\mathcal{B}(\context)$ we have $(A,B)\leq
(C,D)\logeq A\subseteq C$.

\subsection{Scales}\label{sec:motivate}
A fundamental problem for the analysis, the computational treatment,
and the visualization of data is the high dimensionality and complex
structure of modern data sets. Hence, the tasks for \emph{scaling}
data sets to a lower number of dimensions and decreasing their
complexity has growing importance. Many unsupervised (machine
learning) procedures were developed and are applied, for example,
multidimensional scaling \cite{MDS,RecentMDS} or principal component
analysis. These scaling methods use non-linear projections of data
objects (points) into a lower dimensional space. While preserving the
notion of object they loose the interpretability of features as well
as the original algebraic object-feature relation. Therefore, the
advantage of explainability when analyzing nominal or ordinal data
cannot be preserved. Furthermore, most scaling approaches require the
representation of the data points in a real coordinate space of some
dimension, which is in turn, already a scaling for many data sets.

A more fundamental approach to scaling, in particular for nominal and
ordinal data, that preserves the interpretable features can be found
in FCA.

\begin{definition}[Scale-Measure (cf. Definition 91, \cite{fca-book})]
\label{def:sm}
Let $\context = (G,M,I)$ and
$\mathbb{S}=(G_{\mathbb{S}},M_{\mathbb{S}},I_{\mathbb{S}})$ be a
formal contexts. The map $\sigma :G \rightarrow G_{\mathbb{S}}$ is
called an \emph{$\mathbb{S}$-measure of $\context$ into the scale
  $\mathbb{S}$} iff the preimage
$\sigma^{-1}(A)\coloneqq \{g\in G\mid \sigma(g)\in A\}$
of every extent $A\in \Ext(\Scon)$ is an extent of $\context$.
\end{definition}

This definition corresponds the notion for \emph{continuity between
  closure spaces} $(G_1,c_1)$ and $(G_2,c_2)$, i.e., a map $f:G_1\to
G_2$ is continuous iff
\begin{equation}
  \label{eq:cont}
\text{for all}\ A\in\mathcal{P}(G_2)$ we have
$c_1(f^{-1}(A))\subseteq f^{-1}(c_2(A)).  
\end{equation}
This property is equivalent to the requirement in~\cref{def:sm} that
the preimage of closed sets is closed, more formally, 
\begin{equation}
  \label{eq:scales}
  \text{for all}\ A\in \mathcal{P}(G_2)\ \text{with}\ c_2(A)=A\ \text{we have}\ 
  f^{-1}(A)=c_1(f^{-1}(A)).  
\end{equation}
Conditions in~(\ref{eq:cont}) and~(\ref{eq:scales}) are
known to be equivalent, since $\eqref{eq:cont}\Rightarrow\eqref{eq:scales}$
follows from
$x\in c_1(f^{-1}(A))\Rightarrow x\in
f^{-1}(c_2(A))\xRightarrow{c_2(A)=A} x\in f^{-1}(A)$. Also, from
$x\in c_1(f^{-1}(A))\Rightarrow x\in
c_1(f^{-1}(c_2(A)))\xRightarrow{\eqref{eq:scales}} x\in
f^{-1}(c_2(A))$ results (\ref{eq:scales})$\Rightarrow$(\ref{eq:cont}).

In the following we may address by $\sigma^{-1}(\Ext(\Scon))$ the set
of all extents of $\context$ that are \emph{reflected} by the scale
context, i.e., $\bigcup_{A\in\Ext(\Scon)}\sigma^{-1}(A)$. Furthermore,
we want to nourish the understanding of scale-measures as consistent
measurements (or views) of the objects in some scale context. In this
sense we understand the map $\sigma$ as an interpretation of the
objects from $\context$ in $\Scon$.

The following corollary can be deduced from the continuity property
above and will be used frequently throughout our work.
\begin{corollary}[Composition Scale-Measures]\label{lem:trans}
  Let $\context$ be a formal context,  $\sigma$ a $\Scon$-measure
  of $\context$ and $\psi$ a $\Tcon$-measure of $\Scon$. Then is
  $\psi \circ \sigma$ a $\Tcon$-measure of $\context$.
\end{corollary}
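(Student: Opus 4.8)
The plan is to verify the defining condition of \cref{def:sm} for the composite map $\psi\circ\sigma\colon G\to G_{\Tcon}$ directly, by chaining the two given measure properties. The whole argument rests on the elementary fact that taking preimages reverses composition, namely that for any $B\subseteq G_{\Tcon}$ one has
\begin{equation*}
  (\psi\circ\sigma)^{-1}(B)=\sigma^{-1}\!\bigl(\psi^{-1}(B)\bigr).
\end{equation*}
Granting this identity, the statement follows by a two-step transport of extents along the two maps.

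Concretely, I would fix an arbitrary extent $B\in\Ext(\Tcon)$ and argue as follows. First, since $\psi$ is a $\Tcon$-measure of $\Scon$, the defining property of \cref{def:sm} applied to $\psi$ yields that $\psi^{-1}(B)$ is an extent of $\Scon$, i.e.\ $\psi^{-1}(B)\in\Ext(\Scon)$. Second, since $\sigma$ is an $\Scon$-measure of $\context$ and $\psi^{-1}(B)$ is now known to lie in $\Ext(\Scon)$, the same defining property applied to $\sigma$ gives that $\sigma^{-1}\bigl(\psi^{-1}(B)\bigr)$ is an extent of $\context$. Invoking the composition identity above, this set equals $(\psi\circ\sigma)^{-1}(B)$, so the preimage of $B$ under $\psi\circ\sigma$ is an extent of $\context$. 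As $B\in\Ext(\Tcon)$ was arbitrary, $\psi\circ\sigma$ satisfies \cref{def:sm} and is therefore a $\Tcon$-measure of $\context$.

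There is essentially no hard part here: the result is a purely formal consequence of the two hypotheses together with the preimage-composition identity, and under the continuity reformulation in~\eqref{eq:scales} it is just the standard fact that a composite of continuous maps between closure spaces is again continuous. The only point requiring a moment's care is ensuring that the intermediate set $\psi^{-1}(B)$ genuinely qualifies as an \emph{extent} of $\Scon$, and not merely as an arbitrary subset of $G_{\Scon}$, before it is fed into the measure property of $\sigma$; this is exactly what the first step secures, and it is what legitimises the two-step chaining.
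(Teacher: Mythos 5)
Your proof is correct and is exactly the argument the paper intends: the paper states this corollary without a written proof, remarking only that it "can be deduced from the continuity property," and your two-step chaining via the preimage-composition identity is precisely that deduction spelled out.
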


\begin{figure}[t]
  \label{bjicemeasure}
  \begin{center}
    \hspace{-0.65cm}  \scalebox{0.55}{
      \begin{cxt}
        \cxtName{}
        \att{Brownie (B)}
        \att{Peanut (P)}
        \att{Caramel (Ca)}
        \att{Choco (Ch)}
        \att{Dough (D)}
        \att{Vanilla (V)}
        \obj{x..x..}{Fudge Brownie (FB)}
        \obj{...xxx}{Cookie Dough (CD)}
        \obj{x..xxx}{Half Baked (HB)}
        \obj{..xx..}{Caramel Sutra (CS)}
        \obj{..xx..}{\shortstack{Caramel Chew\\ Chew (CCC)}}
        \obj{.x.x..}{\shortstack{Peanut Butter\\ Cup (PBC)}}
        \obj{x.xx.x}{\shortstack{Salted Caramel\\ Brownie (SCP)}}
      \end{cxt}}  
  \end{center}

 \hspace{-0.5cm} \begin{minipage}{0.53\linewidth}
    \scalebox{0.6}{\colorlet{mivertexcolor}{white}
\colorlet{jivertexcolor}{black}
\colorlet{vertexcolor}{black}
\colorlet{bordercolor}{black}
\colorlet{linecolor}{gray}
\tikzset{vertexbase/.style={semithick, shape=circle, inner sep=2pt, outer sep=0pt, draw=bordercolor,draw opacity=0.4},%
  vertex/.style={vertexbase, fill=vertexcolor!45},%
  mivertex/.style={vertexbase, fill=mivertexcolor!45},%
  jivertex/.style={vertexbase, fill=jivertexcolor!45},%
  divertex/.style={vertexbase, top color=mivertexcolor!45, bottom color=jivertexcolor!45},%
  conn/.style={-, thick, color=linecolor}%
}
\tikzstyle{n} = [text width=2.5cm,align=center]
\begin{tikzpicture}[scale=0.35,font=\footnotesize]
  \begin{scope} 
    \begin{scope} 
      \foreach \nodename/\nodetype/\xpos/\ypos in {%
        0/vertex/1.6624628793523328/5.158005322212052,
        1/vertex/-3.428124546751512/9.18431467431844,
        2/vertex/1.6624628793523328/10.334688774920266,
        3/mivertex/6.7530503054561795/10.334688774920266,
        4/vertex/-3.428124546751512/11.485062875522091,
        5/mivertex/6.7530503054561795/12.635436976123916,
        6/mivertex/-8.518711972855357/13.21062402642483,
        7/vertex/-2.4100070615307425/13.21062402642483,
        8/vertex/12.879872702001563/13.21062402642483,
        9/mivertex/1.6624628793523328/14.360998127026654,
        10/vertex/-3.428124546751512/14.936185177327568,
        11/vertex/6.7530503054561795/15.51137222762848,
        12/mivertex/-3.428124546751512/17.81212042883213,
        13/vertex/-8.518711972855357/18.962494529433954,
        14/mivertex/1.6624628793523328/18.962494529433954,
        15/vertex/-3.428124546751512/22.41361683123943
      } \node[\nodetype] (\nodename) at (\xpos, \ypos) {};
    \end{scope} 
    \begin{scope} 
      \path (3) edge[conn,draw opacity=0.4] (5);
      \path (13) edge[conn,draw opacity=0.4] (15);
      \path (9) edge[conn,draw opacity=0.4] (14);
      \path (6) edge[conn,draw opacity=0.4] (13);
      \path (10) edge[conn,draw opacity=0.4] (14);
      \path (0) edge[conn,draw opacity=0.4] (1);
      \path (14) edge[conn,draw opacity=0.4] (15);
      \path (5) edge[conn,draw opacity=0.4] (11);
      \path (4) edge[conn,draw opacity=0.4] (10);
      \path (11) edge[conn,draw opacity=0.4] (14);
      \path (9) edge[conn,draw opacity=0.4] (12);
      \path (7) edge[conn,draw opacity=0.4] (13);
      \path (1) edge[conn,draw opacity=0.4] (6);
      \path (8) edge[conn,draw opacity=0.4] (14);
      \path (0) edge[conn,draw opacity=0.4] (2);
      \path (3) edge[conn,draw opacity=0.4] (9);
      \path (2) edge[conn,draw opacity=0.4] (11);
      \path (1) edge[conn,draw opacity=0.4] (9);
      \path (0) edge[conn,draw opacity=0.4] (8);
      \path (6) edge[conn,draw opacity=0.4] (12);
      \path (2) edge[conn,draw opacity=0.4] (7);
      \path (12) edge[conn,draw opacity=0.4] (15);
      \path (7) edge[conn,draw opacity=0.4] (10);
      \path (1) edge[conn,draw opacity=0.4] (4);
      \path (0) edge[conn,draw opacity=0.4] (3);
      \path (1) edge[conn,draw opacity=0.4] (7);
      %
    \end{scope}
    \begin{scope} 
      \foreach \nodename/\labelpos/\labelopts/\labelcontent in {%
        1/below/n/{HB},    
        2/below/n/{SCB},   
        4/below/n/{CD},    
        4/above/n/{D},     
        8/below/n/{PBC},    
        8/above/n/{PB}, 
        10/above/n/{VI},    
        11/above/n/{Ca},    
        11/below/n/{CCC,CS},
        13/above/n/{B}, 
        13/below/n/{CFB},
        15/above/n/{C}
      } \coordinate[label={[\labelopts]\labelpos:{\labelcontent}}](c) at (\nodename);
    \end{scope}
  \end{scope}
\end{tikzpicture}}
  \end{minipage}$\Rightarrow$\hspace{-0.5cm}
  \begin{minipage}{0.4\linewidth}
    \scalebox{0.7}{\colorlet{mivertexcolor}{black!80}
\colorlet{jivertexcolor}{black!80} 
\colorlet{vertexcolor}{black!80}
\colorlet{bordercolor}{black!80}
\colorlet{linecolor}{gray}
\tikzset{vertexbase/.style={semithick, shape=circle, inner sep=2pt, outer sep=0pt, draw=bordercolor},%
  vertex/.style={vertexbase, fill=vertexcolor!45},%
  mivertex/.style={vertexbase, fill=mivertexcolor!45},%
  jivertex/.style={vertexbase, fill=jivertexcolor!45},%
  divertex/.style={vertexbase, top color=mivertexcolor!45, bottom color=jivertexcolor!45},%
  conn/.style={-, thick, color=linecolor}%
}
\begin{tikzpicture}[scale=0.6]
  \begin{scope} 
    \begin{scope} 
      \foreach \nodename/\nodetype/\xpos/\ypos in {%
        0/vertex/-1.8748824082784559/5.991815616180622,
        1/jivertex/-5.658835428228559/9.081132287699923,
        2/jivertex/-2.005268109125117/9.121072436500471,
        3/vertex/-3.7375352775164608/11.337629350893696,
        4/divertex/-7.6118532455315115/11.374882408278458,
        5/divertex/4.465366545453902/12.685526054254581,
        6/divertex/0.7/12.874788334901222,
        7/divertex/-2.1307619943555956/13.28692380056444,
        8/mivertex/-5.413922859830667/13.665945437441204,
        9/vertex/-2.7130761994355588/16.590310442144872
      } \node[\nodetype] (\nodename) at (\xpos, \ypos) {};
    \end{scope}
    \begin{scope} 
      \path (7) edge[conn] (9);
      \path (1) edge[conn] (4);
      \path (0) edge[conn] (5);
      \path (3) edge[conn] (7);
      \path (0) edge[conn] (1);
      \path (6) edge[conn] (9);
      \path (1) edge[conn] (3);
      \path (8) edge[conn] (9);
      \path (0) edge[conn] (2);
      \path (5) edge[conn] (9);
      \path (2) edge[conn] (6);
      \path (3) edge[conn] (8);
      \path (2) edge[conn] (3);
      \path (4) edge[conn] (8);
    \end{scope}
    \begin{scope} 
      \foreach \nodename/\labelpos/\labelopts/\labelcontent in {%
        1/below//{HB},
        2/below//{SCB},
        4/below//{CD},
        4/above//{D},
        5/below//{PBC},
        5/above//{P},
        6/below//{CS,CCC},
        6/above//{Ca},
        7/below//{FB},
        7/above//{B},
        8/above//{V},
        9/above//{Ch}
      } \coordinate[label={[\labelopts]\labelpos:{\labelcontent}}](c) at (\nodename);
    \end{scope}
  \end{scope}
\end{tikzpicture}}
  \end{minipage}
  \caption{A scale context (top), its concept lattice (bottom right)
    for which $\id_G$ is a scale-measure of the context in
    \cref{bjice} and the reflected extents
    $\sigma^{-1}(\Ext(\context))$ (bottem left) indicated as
    non-transparent.}
\end{figure}
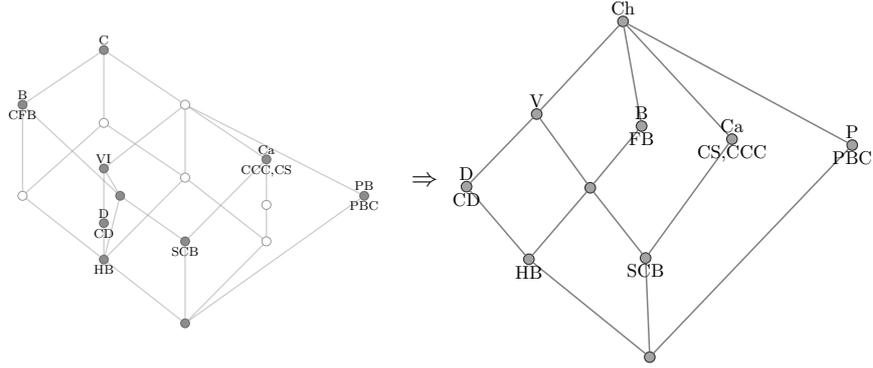

In \cref{bjicemeasure} we depict a scale-measure and its concept
lattice for our running example context \emph{Ben and Jerry's}
$\context_{\text{BJ}}$, cf.~\cref{bjice}. This scale-measure uses the same
object set as the original context and maps every object to
itself. The attribute set is comprised of six elements, which may
reflect the taste, instead of the original nine attributes that
indicated the used ingredients. The specified scale-measure map allows
for a human comprehensible interpretation of $\sigma^{-1}$, as
indicated by the grey colored concepts in~\cref{bjicemeasure}
(bottom). In this figure we observe that the concept lattice of the
scale-measure reflects ten out of the sixteen concepts in
$\mathfrak{B}(\context_{\text{BJ}})$. 

The empirical observations about the afore presented example
scale-measure for some context $\context$ lead
to the question whether scale-measures are always at least as
comprehensible as the context $\context$ itself.
A typical (objective) measure for the complexity of lattices is given
by the following quantity.

\begin{definition}[Order Dimension (cf. Definition
  82,~\cite{fca-book})]\label{def:orddim}
  An ordered set $(P,\leq)$ has \emph{order dimension}
  $\dim(P,\leq)=n$ iff it can be embedded in a direct product of $n$
  chains and $n$ is the smallest number for which this is possible.
\end{definition}

The order dimension of $\BV(\context_{\text{BJ}})$ is three whereas the
concept lattice of the given scale-measure is two. Finding low
dimensional scale-measures for large and complex data sets is a
natural approach towards comprehensible data analysis, as demonstrated
in ~\cref{prop:dim}. In particular, we will answer the question if the
order dimension of scale-measures is bound by the order dimension of
$\BV(\context)$.

Another notion for comparing scale-measures is provided by a natural
order relation amongst scales~\cite[Definition 92]{fca-book}). We may
present in the following a more general definition within the scope of
scale-measures. 


\begin{definition}[Scale-Measure Refinement]\label{def:sm-refine}
  Let the set of all scale-measures of a context be denoted by
  $\Sh(\context)\coloneqq \{(\sigma, \Scon)\mid \sigma$ is a
  $\Scon-$measure of $\context \}$. For
  $(\sigma,\Scon),(\psi,\Tcon)\in \Sh(\context)$ we say
  $(\sigma,\Scon)$ is a \emph{coarser} scale-measure of $\context$
  than $(\psi,\Tcon)$, iff
  $\sigma^{-1}(\Ext(\Scon)) {\subseteq}
  \psi^{-1}(\Ext(\Tcon)$. Analogously we then say $(\psi,\Tcon)$ is \emph{finer} than $(\sigma,\Scon)$.
  If $(\sigma,\Scon)$ is finer and coarser than $(\psi,\Tcon)$ we call
  them \emph{equivalent scale-measures}.
\end{definition}

We remark that the finer relation as well as coarser relation
constitute (partial) order relations on the set of all scale-measure
for context $\context$, since they are obviously reflexive,
anti-symmetric, and the transitivity follow from the continuity of the
composition of scale maps. Hence, we may refer to the refinement
(order) using the symbol $\leq$. By computing
scale-measures with coarser scale contexts with respect to the
\emph{refinement order} we can provide a more general conceptual view
on a data set. The study of such views, e.g. the ice cream tastes in
our running example presented in~\cref{bjicemeasure}, is in a similar
fashion to the Online Analytical Processing tools for multidimensional
databases.

Moreover, the set of all scale-measure for some formal context enables
an abstract analytical structure to navigate and explore a data set
with. Yet, despite the supposed usefulness of the scale-measures, there are up
until now no existing methods, to the best of our knowledge, for the
generation and evaluation of scale-measures, in particular with
respect to data science applications. 

Both tasks, the generation and the evaluation of scale-measures, will
be tackled in the next section using a novel navigation approach among
them.

\section{Navigation though Conceptual Measurement}\label{sec:methods}
Based on the just introduced refinement order of scale-measures we
provide in this section the means for efficiently browsing this
structure. Given a data set, the presented methods are able to compute
arbitrary scale abstractions and the structure operations that connect
them, which resembles a \emph{navigation} through conceptual
measurements. To lay the foundation for the navigation methods we
start with analyzing the structure of all scale-measures. Thereafter
we will present a thorough description of the navigation problem and
its solution.

\begin{lemma}
  The scale-measure equivalence is an equivalence relation on the set
  of scale-measures.
\end{lemma}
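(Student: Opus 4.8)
The plan is to reduce the equivalence of scale-measures to ordinary equality of their reflected extent sets. By \cref{def:sm-refine}, two scale-measures $(\sigma,\Scon)$ and $(\psi,\Tcon)$ are equivalent precisely when each is both finer and coarser than the other, which upon unfolding the refinement order means $\sigma^{-1}(\Ext(\Scon)) \subseteq \psi^{-1}(\Ext(\Tcon))$ together with $\psi^{-1}(\Ext(\Tcon)) \subseteq \sigma^{-1}(\Ext(\Scon))$. My first step is therefore to observe that these two inclusions jointly amount to the single identity $\sigma^{-1}(\Ext(\Scon)) = \psi^{-1}(\Ext(\Tcon))$, so that equivalence of scale-measures is nothing but equality of the associated families of reflected extents of $\context$.

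Once this reformulation is in place, the three defining properties of an equivalence relation follow immediately from the corresponding properties of set equality. For reflexivity I would note that $\sigma^{-1}(\Ext(\Scon)) = \sigma^{-1}(\Ext(\Scon))$ holds trivially for every $(\sigma,\Scon) \in \Sh(\context)$. Symmetry is inherited from the symmetry of $=$, since $\sigma^{-1}(\Ext(\Scon)) = \psi^{-1}(\Ext(\Tcon))$ at once gives $\psi^{-1}(\Ext(\Tcon)) = \sigma^{-1}(\Ext(\Scon))$. For transitivity, given a third scale-measure $(\chi,\Ocon)$ with $\psi^{-1}(\Ext(\Tcon)) = \chi^{-1}(\Ext(\Ocon))$, chaining the two equalities yields $\sigma^{-1}(\Ext(\Scon)) = \chi^{-1}(\Ext(\Ocon))$, as required.

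Alternatively, and more conceptually, I would invoke the remark following \cref{def:sm-refine} that the coarser relation is a preorder on $\Sh(\context)$ --- reflexivity being immediate and transitivity following from \cref{lem:trans} via continuity of composed scale maps --- and then appeal to the standard fact that, for any preorder, the relation ``finer and coarser in both directions'' is automatically an equivalence relation. I do not expect a genuine obstacle here; the only point that deserves a moment's care is confirming that the mutual-refinement condition really collapses to equality of the reflected extent sets rather than to equality of the pairs $(\sigma,\Scon)$ themselves, since distinct scale contexts may well reflect exactly the same extents of $\context$.
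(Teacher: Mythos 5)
Your proposal is correct and follows essentially the same route as the paper: both reduce scale-measure equivalence to the equality $\sigma^{-1}(\Ext(\Scon)) = \psi^{-1}(\Ext(\Tcon))$ of reflected extent sets and then read off reflexivity, symmetry, and transitivity from the corresponding properties of set equality. Your closing remark that equivalence collapses to equality of reflected extents rather than of the pairs themselves is a worthwhile clarification, but it does not change the argument.
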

\begin{proof}
  Let $(\sigma,\Scon),(\psi,\Tcon),(\phi,\Ocon)\in \Sh(\context)$ be
  scale-measures of context $\context$. Using ~\cref{def:sm-refine} we
  know from $(\sigma,\Scon)\sim (\psi,\Tcon)$ that
  $\sigma^{-1}(\Ext(\Scon)) = \psi^{-1}(\Ext(\Tcon))$, from which the
  reflexivity and the symmetry of $\sim$ can be inferred. Analogously
  we can infer for $(\sigma,\Scon)\sim(\psi,\Tcon)$ and
  $(\psi,\Tcon)\sim(\phi,\Ocon)$ that
  $(\sigma,\Scon)\sim(\phi,\Ocon)$.\qed
\end{proof}

Note that for two given equivalent scale-measures that their
scale-measure equivalence does not imply the existence of an bijective
scale-measure between them. Yet, a minor requirement to the
scale-measure map leads to a useful link.

\begin{lemma}\label{lem:eqiso}
  Let $(\sigma,\Scon),(\psi, \Tcon)\in \Sh(\context)$ with
  $(\sigma,\Scon)\sim (\psi, \Tcon)$ and $\sigma,\psi$ are surjective
  maps. Then $\sigma^{-1}\circ \psi$ is an order isomorphism from
  $(\Ext(\Scon),\subseteq)$ to $(\Ext(\Tcon),\subseteq)$.
\end{lemma}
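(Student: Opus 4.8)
The plan is to read the composite $\sigma^{-1}\circ\psi$ as the set-valued map on extents that sends $A\in\Ext(\Scon)$ first to its preimage $\sigma^{-1}(A)\subseteq G$ and then to its image $\psi(\sigma^{-1}(A))\subseteq G_{\Tcon}$, and to show that this map $\Phi\colon A\mapsto\psi(\sigma^{-1}(A))$ is a well-defined order isomorphism onto $(\Ext(\Tcon),\subseteq)$. The backbone of the argument is the elementary fact that a surjective map $f$ satisfies $f(f^{-1}(S))=S$ for every subset $S$ of its codomain. First I would record, for any surjective scale-measure $\rho$ of $\context$ with scale $\Ocon$, that the preimage operation $A\mapsto\rho^{-1}(A)$ is a well-defined (by \cref{def:sm}), inclusion-preserving, and injective map from $(\Ext(\Ocon),\subseteq)$ onto the family of reflected extents $\rho^{-1}(\Ext(\Ocon))$. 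Injectivity holds because $\rho(\rho^{-1}(A))=A$ recovers $A$ by surjectivity, and the very same identity shows that the image operation $E\mapsto\rho(E)$ is the two-sided inverse of $A\mapsto\rho^{-1}(A)$ on those reflected extents; both directions are clearly monotone. Thus each of $\sigma$ and $\psi$ induces an order isomorphism between the extents of its scale and its reflected extents inside $(\Ext(\context),\subseteq)$.

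Second, I would invoke the hypothesis $(\sigma,\Scon)\sim(\psi,\Tcon)$, which by \cref{def:sm-refine} means exactly that $\sigma^{-1}(\Ext(\Scon))=\psi^{-1}(\Ext(\Tcon))$; call this common family of reflected extents $R$. Hence both preimage operations are order isomorphisms onto the same sub-poset $R$ of $(\Ext(\context),\subseteq)$, and $\Phi$ is precisely the composite of the isomorphism $A\mapsto\sigma^{-1}(A)$ from $\Ext(\Scon)$ onto $R$ with the inverse isomorphism $E\mapsto\psi(E)$ from $R$ onto $\Ext(\Tcon)$. As a composition of two order isomorphisms, $\Phi=\sigma^{-1}\circ\psi$ is itself an order isomorphism, as claimed.

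For the write-up I expect the only genuinely delicate points to be well-definedness of the codomain and the reverse direction of monotonicity, which is where surjectivity does the real work. Well-definedness requires that $\psi(\sigma^{-1}(A))$ actually lie in $\Ext(\Tcon)$ rather than be an arbitrary subset of $G_{\Tcon}$: since $\sigma^{-1}(A)\in R=\psi^{-1}(\Ext(\Tcon))$, we may write $\sigma^{-1}(A)=\psi^{-1}(B)$ for some $B\in\Ext(\Tcon)$, and then surjectivity of $\psi$ collapses $\psi(\psi^{-1}(B))=B$, placing the image back among the extents of $\Tcon$. The reverse monotonicity $\Phi(A_1)\subseteq\Phi(A_2)\Rightarrow A_1\subseteq A_2$ is handled symmetrically through $\Phi^{-1}(B)=\sigma(\psi^{-1}(B))$ together with $\sigma(\sigma^{-1}(A))=A$. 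I would stress that without surjectivity neither the injectivity of the preimage map nor the identity $f(f^{-1}(S))=S$ is available, so the surjectivity hypothesis enters essentially rather than cosmetically.
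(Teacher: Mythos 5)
Your argument is correct and follows essentially the same route as the paper's proof: both factor $\sigma^{-1}\circ\psi$ through the common family of reflected extents $R=\sigma^{-1}(\Ext(\Scon))=\psi^{-1}(\Ext(\Tcon))$, using that each preimage operator is a bijective order embedding of the respective extent lattice onto $R$. The only difference is that where the paper cites Proposition 118 of Ganter and Wille for this embedding property (and appeals to finiteness for order reflection), you derive it directly and self-containedly from the identity $f(f^{-1}(S))=S$ for surjective $f$, which also makes explicit where the surjectivity hypothesis is genuinely used.
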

\begin{proof}
  From~\cite[Proposition 118]{fca-book} we have that $\sigma^{-1}$ is
  a injective $\wedge$-preserving order embedding of
  $(\Ext(\Scon),\subseteq)$ into $(\Ext(\context),\subseteq)$ and
  thereby a bijective $\wedge$-preserving order embedding into
  $(\sigma^{-1}(\Ext(\Scon)),\subseteq)$. The analogue holds for
  $\psi^{-1}$ from $\Ext(\Tcon)$ into $\psi^{-1}(\Ext(\Tcon))$. Due to
  $(\sigma,\Scon)\sim (\psi, \Tcon)$ we know that
  $\sigma^{-1}(\Ext(\Scon))=\psi^{-1}(\Ext(\Tcon))$, which results in
  $\sigma^{-1}$ being a bijective $\wedge$-preserving order embedding
  into $\psi^{-1}(\Ext(\Tcon))$. Hence, when restricting
  $\sigma^{-1}\circ \psi:\mathcal{P}(G_\mathbb{S})\to
  \mathcal{P}(G_{\mathbb{T}})$ to the respective extent set we obtain
  a bijective map. The fact that all formal contexts are finite
  (throughout this work) and the monotonicity of the lifts of
  $\sigma^{-1}$ and $\psi$ to their respective power sets imply the
  required order preserving property follow.\qed
\end{proof}
We may stress that the required surjectivity is not constraining the
application of scale-measures, since any object $g$ of a scale-context
having an empty preimage may just be removed from the scale-context
without consequences to the analysis. 

The just discussed equivalence relation together with the refinement
order allows to cope with the set of all scale-measures
$\Sh(\context)$ in a meaningful way.

\begin{definition}[Scale-Hierarchy]\label{def:Sh}
  Given a formal context $\context$ and its set of all scale-measures
  $\Sh(\context)$, we call
  $\SH(\context)\coloneqq(\nicefrac{\Sh(\context)}{\sim},\leq)$ the
  \emph{scale-hierarchy} of $\context$.
\end{definition}

The order structure thus given represents all possible means of
scaling a (contextual) data set. Yet, it seems hardly comprehensible
or even applicable in that form. Therefore the goal for the rest of
this section is to achieve a characterization of said structure in
terms of closure systems. 

\begin{lemma}\label{lem:csctx}
  Let $G$ be a set and $\mathcal{A}\subseteq \mathcal{P}(G)$ be a
  closure system. Furthermore, let
  $\context_{\mathcal{A}}=(G,\mathcal{A},\in)$ be a formal context
  using the element relation as incidence. Then the set of extents
  $\Ext(\context_{\mathcal{A}})$ is equal to the closure system
  $\mathcal{A}$.
\end{lemma}
\begin{proof}
  For any set $D\subseteq G$ and $A\in\mathcal{A}$ we find ($\ast$) $D\subseteq
  A\implies A\in D'$. Since $\mathcal{A}$ is a closure system and
  $D''=\bigcap D'$ we see that $D''\in\mathcal{A}$, hence,
  $\Ext(\context_{\mathcal{A}})\subseteq \mathcal{A}$. Conversely, for $A\in
  \mathcal{A}$ we can draw from ($\ast$) that $A''=A$, thus
  $A\in\Ext(\context_{\mathcal{A}})$.\qed
\end{proof}

We want to further motivate the constructed formal context
$\context_{\mathcal{A}}$ and its particular utility with respect to
scale-measures for some context $\context$. Since both contexts have
the same set of objects, we may study the use of the identity map $\id:G\to
G, g\mapsto g$ as scale-measure map. 

\begin{lemma}[Canonical Construction]\label{lem:cssm}
  For a context $\context$ and any $\Scon$-measure $\sigma$ is $\id$ a
  $\context_{\sigma^{-1}(\Ext(\Scon))}$-measure of $\context$, i.e.,
  $(\id,\context_{\sigma^{-1}(\Ext(\Scon))})\in\Sh(\context)$.
\end{lemma}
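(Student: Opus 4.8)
The plan is to show that the identity map $\id\colon G\to G$ is a $\context_{\sigma^{-1}(\Ext(\Scon))}$-measure of $\context$ by directly verifying the condition of~\cref{def:sm}. First I would invoke~\cref{lem:csctx} with the closure system $\mathcal{A}\coloneqq\sigma^{-1}(\Ext(\Scon))$; to apply that lemma I must first confirm that $\sigma^{-1}(\Ext(\Scon))$ really is a closure system on $G$. This is where the continuity of $\sigma$ enters: since $\sigma$ is an $\Scon$-measure, each preimage $\sigma^{-1}(A)$ for $A\in\Ext(\Scon)$ is an extent of $\context$, and preimages commute with intersections, so $\sigma^{-1}(\Ext(\Scon))$ is closed under arbitrary intersections and contains $G=\sigma^{-1}(G_\Scon)$. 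Hence it is a closure system, and~\cref{lem:csctx} applies to give $\Ext(\context_{\sigma^{-1}(\Ext(\Scon))})=\sigma^{-1}(\Ext(\Scon))$.

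Next I would carry out the actual continuity check. By~\cref{def:sm} applied to the map $\id$ and the scale $\context_{\sigma^{-1}(\Ext(\Scon))}$, I need that for every extent $B\in\Ext(\context_{\sigma^{-1}(\Ext(\Scon))})$ the preimage $\id^{-1}(B)$ is an extent of $\context$. Since $\id^{-1}(B)=B$ and, by the previous paragraph, every such $B$ lies in $\sigma^{-1}(\Ext(\Scon))$, it is of the form $\sigma^{-1}(A)$ for some $A\in\Ext(\Scon)$, which is an extent of $\context$ precisely because $\sigma$ is an $\Scon$-measure. This closes the verification and yields $(\id,\context_{\sigma^{-1}(\Ext(\Scon))})\in\Sh(\context)$.

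The argument is essentially a bookkeeping composition: the only genuinely load-bearing step, and the one I would flag as the main obstacle, is establishing that $\sigma^{-1}(\Ext(\Scon))$ is a closure system so that~\cref{lem:csctx} is legitimately invoked. Everything after that is the observation that $\id^{-1}$ acts as the identity on subsets, so the extents of the constructed scale coincide with the reflected extents of $\context$, which are extents of $\context$ by definition of a scale-measure. I would therefore spend most of the written proof on the closure-system property (intersection-stability of preimages and the presence of the top element $G$), and dispatch the continuity condition in a single line.
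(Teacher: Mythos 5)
Your proposal is correct and follows essentially the same route as the paper's proof: invoke \cref{lem:csctx} to identify $\Ext(\context_{\sigma^{-1}(\Ext(\Scon))})$ with $\sigma^{-1}(\Ext(\Scon))$, then observe that these sets are extents of $\context$ because $\sigma$ is an $\Scon$-measure and $\id^{-1}$ acts as the identity. Your explicit verification that $\sigma^{-1}(\Ext(\Scon))$ is a closure system (stability under intersections of preimages, plus $G=\sigma^{-1}(G_\Scon)$) is a hypothesis of \cref{lem:csctx} that the paper leaves implicit, so including it is a small but welcome improvement rather than a different argument.
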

\begin{proof}
  \cref{lem:csctx} gives that
  $\Ext(\context_{\sigma^{-1}(\Ext(\Scon))})$ is equal to
  $\sigma^{-1}(\Ext(\Scon))$. Since $(\sigma,\Scon)\in\Sh(\context)$,
  i.e., $(\sigma,\Scon)$ is a scale-measure of $\context$, 
  we see that the preimage $\sigma^{-1}(\Ext(\Scon))\subseteq \Ext(\context)$, and
  thus
  $\id^{-1}(\Ext(\context_{\sigma^{-1}(\Ext(\Scon))}))\subseteq\Ext(\context)$.
  \qed
\end{proof}

Using the canonical construction of a scale-measure, as given above,
we can facilitate the understanding of the scale-hierarchy
$\SH(\context)$.

\begin{proposition}[Canonical Representation]\label{prop:eqi-scale}
  Let $\context = (G,M,I)$ be a formal context with scale-measure $(\Scon,\sigma)\in
  \Sh(\context)$, then $(\sigma,\Scon)\sim (\id, \context_{\sigma^{-1}(\Ext(\Scon))})$.
\end{proposition}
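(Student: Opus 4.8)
The plan is to reduce the claimed equivalence directly to its definition. By \cref{def:sm-refine}, two scale-measures are equivalent precisely when they reflect the same extents, so it suffices to establish the set equality
$\sigma^{-1}(\Ext(\Scon)) = \id^{-1}(\Ext(\context_{\sigma^{-1}(\Ext(\Scon))}))$ between the two reflected-extent collections.

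First I would record that the collection $\sigma^{-1}(\Ext(\Scon))$ is a closure system on $G$: the preimage operator commutes with arbitrary intersections, and $\Ext(\Scon)$ is intersection-closed with $G_{\Scon}$ as its top element, whose preimage is $G$. Hence $\context_{\sigma^{-1}(\Ext(\Scon))}$ is well defined. In any case \cref{lem:cssm} already guarantees that $(\id, \context_{\sigma^{-1}(\Ext(\Scon))})$ is a genuine scale-measure of $\context$, so both members of the asserted equivalence lie in $\Sh(\context)$ and comparing them via \cref{def:sm-refine} is legitimate.

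Next I would invoke \cref{lem:csctx} with $\mathcal{A} = \sigma^{-1}(\Ext(\Scon))$, which yields $\Ext(\context_{\sigma^{-1}(\Ext(\Scon))}) = \sigma^{-1}(\Ext(\Scon))$. Since the scale map of the canonical scale-measure is the identity, the preimage operator $\id^{-1}$ fixes every subset of $G$, so
$\id^{-1}(\Ext(\context_{\sigma^{-1}(\Ext(\Scon))})) = \Ext(\context_{\sigma^{-1}(\Ext(\Scon))}) = \sigma^{-1}(\Ext(\Scon))$.
This is exactly the reflected-extent collection of $(\sigma, \Scon)$, so each of the two scale-measures is both finer and coarser than the other, i.e.\ they are equivalent.

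I do not anticipate a genuine obstacle here; the argument is essentially a bookkeeping step once \cref{lem:csctx} is in hand. The only point requiring care is the reading of the notation $\sigma^{-1}(\Ext(\Scon))$ as the \emph{collection} $\{\sigma^{-1}(A) \mid A \in \Ext(\Scon)\}$ of reflected extents rather than as their union, together with the verification that this collection is a closure system, which is precisely what makes the construction $\context_{(\cdot)}$ and hence \cref{lem:csctx} applicable.
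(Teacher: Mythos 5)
Your argument is correct and follows the same route as the paper's own proof: both rest on \cref{lem:cssm} to certify that $(\id,\context_{\sigma^{-1}(\Ext(\Scon))})$ is a scale-measure and on \cref{lem:csctx} to identify its extent set with $\sigma^{-1}(\Ext(\Scon))$, which is exactly what \cref{def:sm-refine} demands. Your additional remark that $\sigma^{-1}(\Ext(\Scon))$ must be verified to be a closure system (so that \cref{lem:csctx} applies) is a detail the paper leaves implicit, but it does not change the approach.
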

\begin{proof}
  \cref{lem:cssm} states that $\id$ is a
  $\context_{\sigma^{-1}(\Ext(\Scon))}$-measure of
  $\context$. Furthermore, from~\cref{lem:csctx} we know that the
  extent set of $\context_{\sigma^{-1}(\Ext(\Scon))}$ is
  $\sigma^{-1}(\Ext(\Scon))$, as required
  by~\cref{def:sm-refine}.\qed
\end{proof}

Equipped with this proposition we are now able to compare sets of
scale-measures for a given formal context $\context$ solely based on
their respective attribute sets in the canonical
representation. Furthermore, since these representation sets are
sub-closure systems of $\Ext(\context)$, by~\cref{def:sm}, we may
reformulate the problem for navigating scale-measures using
sub-closure systems and their relations. For this we want to nourish
the understanding of the correspondence of scale-measures and
sub-closure systems in the following.

\begin{proposition}\label{cor:size}
  For a formal context $\context$ and the set of all sub-closure
  systems $\mathfrak{C}(\context)
  \subseteq\mathcal{P}(\Ext(\context))$ together with the inclusion
  order,  the following map is an order isomorphism:
  \[{i}:\mathfrak{C}(\context)\to\Sh(\context)_{/\sim},\ \mathcal{A}\mapsto
    i(\mathcal{A})\coloneqq(\id,\context_{\mathcal{A}})\]
\end{proposition}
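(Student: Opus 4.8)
The plan is to show that $i$ is a well-defined, surjective order embedding; since a surjective order embedding between posets is automatically a bijection with order-preserving inverse, this already yields that $i$ is an order isomorphism. The computation underlying every step is that for the identity map the reflected extents collapse onto the underlying closure system: by \cref{lem:csctx} we have $\Ext(\context_{\mathcal{A}})=\mathcal{A}$, and since $\id^{-1}(A)=A$ this gives $\id^{-1}(\Ext(\context_{\mathcal{A}}))=\mathcal{A}$ for every $\mathcal{A}\in\mathfrak{C}(\context)$.

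First I would confirm that $i$ actually maps into $\Sh(\context)_{/\sim}$, i.e.\ that $(\id,\context_{\mathcal{A}})$ is a scale-measure. This is immediate from the identity above: the reflected extents equal $\mathcal{A}$, which is a sub-closure system of $\Ext(\context)$ by the definition of $\mathfrak{C}(\context)$, so the preimage of every extent of $\context_{\mathcal{A}}$ is an extent of $\context$, as required by \cref{def:sm}. Next I would establish that $i$ preserves \emph{and} reflects the refinement order simultaneously: using $\id^{-1}(\Ext(\context_{\mathcal{A}}))=\mathcal{A}$ together with \cref{def:sm-refine}, the relation $i(\mathcal{A})\leq i(\mathcal{B})$ holds precisely when $\mathcal{A}\subseteq\mathcal{B}$. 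Thus $i$ is an order embedding, and injectivity follows at once from the antisymmetry of $\subseteq$.

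The step I expect to carry the real weight is surjectivity, and it is here that the canonical-representation result does the work. Given an arbitrary class represented by $(\sigma,\Scon)\in\Sh(\context)$, I would set $\mathcal{A}\coloneqq\sigma^{-1}(\Ext(\Scon))$ and first verify $\mathcal{A}\in\mathfrak{C}(\context)$: it is contained in $\Ext(\context)$ because $\sigma$ is a scale-measure, it contains $G$ as the preimage of the top extent $G_{\Scon}$, and it is closed under intersection because preimages commute with intersection while $\Ext(\Scon)$ is itself a closure system. With $\mathcal{A}$ thereby a legitimate point of the domain, \cref{prop:eqi-scale} yields $(\sigma,\Scon)\sim(\id,\context_{\mathcal{A}})=i(\mathcal{A})$, so the chosen class lies in the image of $i$.

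The only genuine subtlety is this last verification that every family of reflected extents is again a closure system, which is precisely what guarantees that the preimage supplied by \cref{prop:eqi-scale} is a valid element of $\mathfrak{C}(\context)$ rather than merely an arbitrary subset of $\Ext(\context)$. Everything else reduces to the single identity $\id^{-1}(\Ext(\context_{\mathcal{A}}))=\mathcal{A}$, which simultaneously settles well-definedness, both directions of order preservation, and injectivity.
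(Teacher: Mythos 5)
Your proposal is correct and follows essentially the same route as the paper: well-definedness via the identity $\id^{-1}(\Ext(\context_{\mathcal{A}}))=\mathcal{A}$ (\cref{lem:csctx,lem:cssm}), surjectivity via the canonical representation (\cref{prop:eqi-scale}), and order compatibility read off from set inclusion. If anything, you are more careful than the paper in two places worth keeping: you check order \emph{reflection} (not just preservation), which is what actually upgrades a bijection to an order isomorphism, and you explicitly verify that $\sigma^{-1}(\Ext(\Scon))$ is a genuine sub-closure system before feeding it to $i$.
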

\begin{proof}
  Let $\mathcal{A},\mathcal{B}\subseteq \Ext(\context)$ be two closure
  systems on $G$. Then the images of $\mathcal{A}$ respectively
  $\mathcal{B}$ under $i$ are a scale-measures of $\context$,
  according to \cref{lem:cssm}, with extents $\mathcal{A}$ and
  $\mathcal{B}$, respectively. Since $\mathcal{A}\neq
  \mathcal{B}\iff\Ext(\context_{\mathcal{A}})\neq\Ext(\context_{\mathcal{B}})$
  are different and therefore $(\id,\context_{\mathcal{B}})\not\sim
  (\id,\context_{\mathcal{B}})$, thus, $i$ is an injective map. For
  the surjectivity of $i$ let
  $[(\sigma,\Scon)]\in\Sh(\context)_{/\sim}$, then
  $(\id,\context_{\sigma^{-1}(\Ext(\Scon))})\sim(\sigma,\Scon)$, i.e.,
  an equivalent representation having extents
  $\sigma^{-1}(\Ext(\Scon))\subseteq\Ext(\context)$ and
  $i(\sigma^{-1}(\Ext(\Scon)))=(\id,\context_{\sigma^{-1}(\Ext(\Scon))})$. Finally,
  for $\mathcal{A}\subseteq \mathcal{B}$ we find that
  $i(\mathcal{A})\subseteq i(\mathcal{B})$, since
  $\Ext(\context_{\mathcal{A}}) \subseteq
  \Ext(\context_{\mathcal{B}})$, as required.\qed
\end{proof}

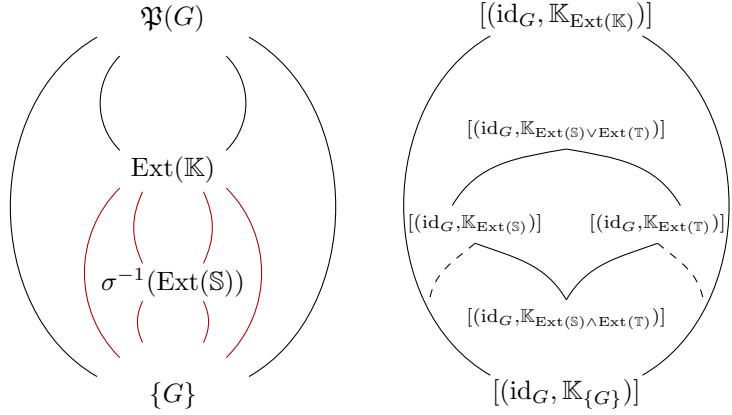
\begin{figure}[t]
  \centering
  \begin{tikzpicture}

    \draw (1,-4.25) to[out=30, in=-30] (1,0.25);
    \draw (-1,-4.25) to[out=150, in=-150] (-1,0.25);

    \draw[draw=black!30!red] (0.7,-4) to[out=45, in=-45] (0.7,-1.75);
    \draw[draw=black!30!red] (-0.7,-4) to[out=135, in=-135] (-0.7,-1.75);

    \draw (0.7,-1.25) to[out=45, in=-45] (0.7,0);
    \draw (-0.7,-1.25) to[out=135, in=-135] (-0.7,0);

    \draw[draw=black!30!red] (0.4,-3.8) to[out=65, in=-65] (0.4,-3.25);
    \draw[draw=black!30!red] (-0.4,-3.8) to[out=115, in=-115] (-0.4,-3.25);

    \draw[draw=black!30!red] (0.4,-2.75) to[out=65, in=-65] (0.4,-1.8);
    \draw[draw=black!30!red] (-0.4,-2.75) to[out=115, in=-115] (-0.4,-1.8);

    \node[draw opacity = 0,draw=white, text=black] at (0,0.5)
    {$\mathfrak{P}(G)$};
    \node[draw opacity = 0,draw=white, text=black] at (0,-4.5)
    {$\{G\}$};
    \node[draw opacity = 0,draw=white, text=black] at (0,-1.5)
    {$\Ext(\context)$};
    \node[draw opacity = 0,draw=white, text=black] at (0,-3)
    {$\sigma^{-1}(\Ext(\Scon))$}; 
  \end{tikzpicture}
  \begin{tikzpicture}

    \draw (1,-4.25) to[out=30, in=-30] (1,0.25);
    \draw (-1,-4.25) to[out=150, in=-150] (-1,0.25);

    \draw (1.5,-2) to[out=120, in=-10] (0,-1.25);
    \draw (-1.5,-2) to[out=60, in=-170] (-0,-1.25);

    \draw (-1.2,-2.5) to[out=-20, in=120] (-0,-3.25);
    \draw[dashed] (-1.2,-2.5) to[out=220, in=80] (-1.8,-3.25);

    \draw (1.2,-2.5) to[out=200, in=60] (-0,-3.25);
    \draw[dashed] (1.2,-2.5) to[out=-40, in=100] (1.8,-3.25);
    \node[draw opacity = 0,draw=white, text=black] at (0,0.5)
    {$[(\id_{G},\context_{\Ext(\context)})]$};
    \node[draw opacity = 0,draw=white, text=black] at (0,-4.5)
    {$[(\id_{G}, \context_{\{G\}})]$};

    \node[draw opacity = 0,draw=white, text=black] at (-1.2,-2.25)
    {$\scriptstyle[(\id_G,\context_{\Ext(\Scon)})]$};
    \node[draw opacity = 0,draw=white, text=black] at (1.2,-2.25)
    {$\scriptstyle[(\id_G,\context_{\Ext(\Tcon)})]$};

    \node[draw opacity = 0,draw=white, text=black] at (0,-3.5)
    {$\scriptstyle[(\id_G,\context_{\Ext(\Scon)\wedge\Ext(\Tcon)})]$};
    \node[draw opacity = 0,draw=white, text=black] at (0,-1)
    {$\scriptstyle[(\id_G, \context_{\Ext(\Scon)\vee\Ext(\Tcon)})]$};

  \end{tikzpicture}
  \caption{Scale-Hierarchy of $\context$ (right) and embedded in boolean
    $\mathbb{B}_G$}
  \label{fig:SmAsCl}
\end{figure}

This order isomorphism allows us to analyze the structure of the
scale-hierarchy by studying the related closure systems. For instance,
the problem of computing $|\SH(\context)|$, i.e., the size of the
scale-hierarchy. In the case of the boolean context
$\context_{\mathcal{P}(G)}$ this problem equivalent to the question
for the number of Moore families, i.e., the number of closure systems
on $G$. This number grows tremendously in $|G|$ and is known up to
$|G|=7$, for which it is known~\cite{size1,size2,size3} to be
$14\,087\,648\,235\,707\,352\,472$. In the general case the size of
the scale-hierarchy is equal to the size of the order ideal
$\downarrow\Ext(\context)$ in
$\mathfrak{C}(\context_{\mathcal{P}(G)})$.

The fact that the set of all closure systems on $G$ is again a closure
system~\cite{CASPARD2003241}, which is lattice ordered by set
inclusion, allows for the following statement.

\begin{corollary}[Scale-hierarchy Order]
  For a formal context $\context$, the scale-hierarchy $\SH(\context)$
  is lattice ordered.
\end{corollary}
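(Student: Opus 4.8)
The plan is to transport the lattice structure along the order isomorphism already in hand, so that nothing needs to be verified directly on the quotient $\nicefrac{\Sh(\context)}{\sim}$. By \cref{cor:size} the map $i\colon\mathfrak{C}(\context)\to\Sh(\context)_{/\sim}$, $\mathcal{A}\mapsto(\id,\context_{\mathcal{A}})$, is an order isomorphism, and by \cref{def:Sh} the scale-hierarchy $\SH(\context)$ is exactly $(\nicefrac{\Sh(\context)}{\sim},\leq)$. Since an order isomorphism carries existing meets and joins to meets and joins, it suffices to show that $(\mathfrak{C}(\context),\subseteq)$ is a lattice; the operations on $\SH(\context)$ are then recovered by conjugating those on $\mathfrak{C}(\context)$ with $i$ and $i^{-1}$.

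To see that $\mathfrak{C}(\context)$ is a lattice I would first invoke the cited fact \cite{CASPARD2003241} that the family of all closure systems on $G$ is itself a closure system on $\mathcal{P}(G)$: it contains the top element $\mathcal{P}(G)$ and is stable under arbitrary intersections, because an intersection of families each closed under intersection is again closed under intersection and still contains $G$. Hence this family, ordered by inclusion, is a complete lattice. Now $\mathfrak{C}(\context)$ is precisely the set of closure systems $\mathcal{A}$ with $\mathcal{A}\subseteq\Ext(\context)$, i.e.\ the principal order ideal $\downarrow\Ext(\context)$ inside that complete lattice; note that $\Ext(\context)$ is a legitimate top because $G''=G$ always holds, and $\{G\}$ is a legitimate bottom.

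It remains to check that this ideal is closed under the two lattice operations, which is where the only genuine verification lies. For the meet, given $\mathcal{A},\mathcal{B}\in\mathfrak{C}(\context)$ the intersection $\mathcal{A}\cap\mathcal{B}$ is again a closure system and is contained in $\Ext(\context)$, so it is the greatest lower bound within $\mathfrak{C}(\context)$. For the join one takes the smallest closure system containing $\mathcal{A}\cup\mathcal{B}$; since $\Ext(\context)$ is itself a closure system containing $\mathcal{A}\cup\mathcal{B}$, this least closure system is still contained in $\Ext(\context)$ and hence lies in $\mathfrak{C}(\context)$. Thus $\mathfrak{C}(\context)$ is closed under both operations and forms a (complete) lattice, and applying $i$ transfers this structure verbatim to $\SH(\context)$, proving the claim.

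The step I expect to require the most care is the closure of $\downarrow\Ext(\context)$ under joins: unlike the meet, the join is not the set-theoretic union but the closure-system hull of the union, so one must argue that passing to this hull cannot leave $\Ext(\context)$ --- which it does not, exactly because $\Ext(\context)$ is already a closure system containing the union and the hull is the least such. Everything else, including the preservation of meets and joins by an order isomorphism and the finiteness that collapses the distinction between \emph{lattice} and \emph{complete lattice}, is routine.
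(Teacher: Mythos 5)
Your proof is correct and follows essentially the same route as the paper, which derives the corollary from the cited fact that the closure systems on $G$ form a closure system (hence a lattice under inclusion) together with the order isomorphism of \cref{cor:size} identifying $\SH(\context)$ with the principal ideal $\downarrow\Ext(\context)$. You merely spell out the details the paper leaves implicit (closure of the ideal under meet and join), and these are verified correctly.
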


We depicted this lattice order relation in the form of abstract
visualizations in~\cref{fig:SmAsCl}. In the bottom (right) we see the
most simple scale which has only one attribute, $G$. The top (right)
element in this figure is then the scale which has all extents of
$\context$. On the left we see the lattice ordered set of all closure
systems on a set $G$, in which we find the embedding of the hierarchy
of scales.

\begin{proposition}\label{prop:lattice}
        Let $(\sigma,\Scon),(\psi,\Tcon)\in \SH(\context)$ and let
        $\wedge,\vee$ be the natural lattice operations in
        $\SH(\context)$, (induced by the lattice order relation). We
        then find that:
    \begin{itemize}
    \item[Meet]: $(\sigma,\Scon)\wedge (\psi,\Tcon)=
      (\id,\context_{\sigma^{-1}(\Ext(\Scon))\cap \psi^{-1}(\Ext(\Tcon ))})$,
    \item[Join]: $(\sigma,\Scon)\vee (\psi,\Tcon)=
      (\id,\context_{\{A \cap B \mid A \in \sigma^{-1}(\Ext(\Scon)), B
        \in \psi^{-1}(\Ext(\Tcon))\}})$.
  \end{itemize}
\end{proposition}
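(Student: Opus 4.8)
The plan is to transport the computation to the lattice $\mathfrak{C}(\context)$ of sub-closure systems of $\Ext(\context)$, where meet and join admit transparent descriptions, and then carry the result back along the order isomorphism $i$ of \cref{cor:size}. First I would recall the standard fact that an order isomorphism between lattices preserves all meets and joins that exist; since $i$ is an order isomorphism and both $\mathfrak{C}(\context)$ and $\SH(\context)$ are lattices, we have $i(\mathcal{A})\wedge i(\mathcal{B})=i(\mathcal{A}\wedge\mathcal{B})$ and $i(\mathcal{A})\vee i(\mathcal{B})=i(\mathcal{A}\vee\mathcal{B})$ for all $\mathcal{A},\mathcal{B}\in\mathfrak{C}(\context)$, with the operations on the right taken in $\mathfrak{C}(\context)$. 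By \cref{prop:eqi-scale} the classes of $(\sigma,\Scon)$ and $(\psi,\Tcon)$ equal $i(\mathcal{A})$ and $i(\mathcal{B})$ for $\mathcal{A}\coloneqq\sigma^{-1}(\Ext(\Scon))$ and $\mathcal{B}\coloneqq\psi^{-1}(\Ext(\Tcon))$, so it suffices to identify $\mathcal{A}\wedge\mathcal{B}$ and $\mathcal{A}\vee\mathcal{B}$ in $\mathfrak{C}(\context)$ and apply $i$.

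For the meet I would argue that $\mathcal{A}\cap\mathcal{B}$ is the greatest lower bound. The intersection of two closure systems is again a closure system, it lies in $\Ext(\context)$ since both $\mathcal{A}$ and $\mathcal{B}$ do, and by construction it is the largest closure system contained in both; hence it is the meet in $\mathfrak{C}(\context)$. Applying $i$ then yields exactly $(\id,\context_{\sigma^{-1}(\Ext(\Scon))\cap\psi^{-1}(\Ext(\Tcon))})$.

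For the join the substantive point is to show that $\mathcal{C}\coloneqq\{A\cap B\mid A\in\mathcal{A},\,B\in\mathcal{B}\}$ is precisely the least closure system containing $\mathcal{A}\cup\mathcal{B}$. Since $G\in\mathcal{A}\cap\mathcal{B}$, taking $A=G$ or $B=G$ shows $\mathcal{A}\cup\mathcal{B}\subseteq\mathcal{C}$, and $\mathcal{C}\subseteq\Ext(\context)$ because $\Ext(\context)$ is closed under intersection. That $\mathcal{C}$ is a closure system follows from the identity $(A_1\cap B_1)\cap(A_2\cap B_2)=(A_1\cap A_2)\cap(B_1\cap B_2)$ together with the closedness of $\mathcal{A}$ and $\mathcal{B}$ under intersection, so that any binary, hence by finiteness any, intersection of members of $\mathcal{C}$ remains in $\mathcal{C}$. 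Finally, any closure system containing $\mathcal{A}\cup\mathcal{B}$ must contain every such intersection $A\cap B$, so $\mathcal{C}$ is the smallest one, i.e. the join in $\mathfrak{C}(\context)$; applying $i$ gives the stated join.

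The main obstacle is the join step: one must verify both that the proposed set of pairwise intersections is genuinely a closure system (closure under intersection being the real content) and that it is the \emph{least} upper bound rather than merely an upper bound. The meet step is routine once one recalls that intersection is the meet in the lattice of closure systems, and the reduction through $i$ is immediate from \cref{cor:size} and the fact that order isomorphisms commute with existing meets and joins.
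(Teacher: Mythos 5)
Your proof is correct and takes essentially the same route as the paper: both transport the computation along the order isomorphism $i$ of \cref{cor:size} into the lattice of sub-closure systems of $\Ext(\context)$, identify the two scale-measures with $\sigma^{-1}(\Ext(\Scon))$ and $\psi^{-1}(\Ext(\Tcon))$ via the canonical representation, and read off meet and join there. The only difference is that you verify from first principles that intersection and the set of pairwise intersections are the meet and join in the lattice of closure systems, whereas the paper delegates exactly these two facts to a citation on the lattice of Moore families.
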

\begin{proof}
  \begin{inparaenum}
  \item For the preimages $i^{-1}(\sigma,\Scon)$,
    $i^{-1}(\psi,\Tcon)$ (\cref{cor:size}) we can compute their
    meet~\cite{CASPARD2003241}, which yields \[i^{-1}(\sigma,\Scon)\wedge
    i^{-1}(\psi,\Tcon)=\sigma^{-1}(\Ext(\Scon))
    \cap \psi^{-1}(\Ext(\Tcon)).\]
  
\item The join~\cite{CASPARD2003241} of the scale-measure preimages
  under $i$ (\cref{cor:size}) is equal to $\{A \cap B \mid A \in
  \sigma^{-1}(\Ext(\Scon)), B \in \psi^{-1}(\Ext(\Tcon))\}$ , which
  results in the required expression by applying the order isomorphism
  $i$. 
  \end{inparaenum}\qed
\end{proof}

\subsection{Propositional Navigation through Scale-Measures} 
Although the canonical representation of scale-measures is complete up
to equivalence~\cref{prop:eqi-scale}, this representation eludes human
explanation to some degree. In particular the use of the extentional
structure of $\context$ as attributes provides insight to the
scale-hierarchy itself, however, not to the data, i.e., the objects,
attributes, and their relation. A formulation of scales using
attributes from $\context$, and their combinations, seems more natural
and more comprehensible. For this, we employ an approach as used in
\cite{logiscale}. In their work the authors used a logic on the
context's attributes to introduce new attributes. The advantage is
that the so newly introduced attributes have a real-world semantic in
terms of the measured properties. In this work we use propositional
logic, which leads to the following problem description.

\begin{problem}[Navigation Problem]\label{problem:navi}
  For a formal context $\context$, a scale-measure $(\sigma,\Scon)\in
  \Sh(\context)$ and
  $M_\Tcon\subseteq\mathcal{L}(M,\{\wedge,\vee,\neg\})$, compute an
  equivalent scale-measure $(\psi,\Tcon)\in\Sh(\context)$, i.e.,
  $(\sigma,\Scon)\sim(\psi,\Tcon)$, where $(h,m)\in
  I_\Tcon\Leftrightarrow \psi^{-1}(h)^{I}\models m$.
\end{problem}

The attributes of $\Tcon$ are logical expression build from the
attributes of $\context$, and are thus interpretable in terms of the
measurements by the attributes $M$ from $\context$. For example, we can
express the \emph{Choco} taste attribute of our running example
(\cref{bjicemeasure}) as the disjunction of the ingredients
\emph{Choco Ice} or \emph{Choco Pieces}, i.e.  \emph{Choco}$\coloneqq
$\emph{Choco Ice}$\vee$\emph{Choco Pieces}. For any scale-measure
$(\sigma,\Scon)$, such an equivalent scale-measure, as searched for
in~\cref{problem:navi}, is not necessarily unique, and the problem
statement does not favor any of the possible solutions.

To understand the semantics of the logical operations, we first
investigate their contextual derivations. For $\phi \in
\mathcal{L}(M,\{\wedge,\vee,\neg\})$ we let $\Var(\phi)$ be the set of
all propositional variables in the expression $\phi$. We require from
$\phi\in\mathcal{L}(M,\{\wedge,\vee,\neg\})$ that $|\Var(\phi)|>0$. 

\begin{lemma}[Logical Derivations]\label{lem:deri}
  Let $\context=(G,M,I)$ be a formal context,
  $\phi_\wedge\in \mathcal{L}(M,\{\wedge\})$,
  $\phi_\vee\in\mathcal{L}(M,\{\vee\})$, $\phi_\neg \in
  \mathcal{L}(M,\{\neg\})$, with scale contexts
  $(G,\{\phi\},I_{\phi})$ having the incidence $(g,\phi)\in
  I_{\phi}\iff g^{I}\models \phi$ for
  $\phi\in\{\phi_\vee,\phi_\wedge,\phi_\neg\}$. Then we find
  \begin{enumerate}[i)]
  \item $\{\phi_\wedge\}^{I_{\phi_{\wedge}}} = \Var(\phi_{\wedge})^{I}$, 
  \item $\{\phi_\vee\}^{I_{\phi_{\vee}}}=\bigcup_{m\in
      \Var(\phi_{\vee})}\{m\}^{I}$,
  \item $\{\phi_\neg\}^{I_{\phi_{\neg}}} = G\setminus \{n\}^{I}$ with
    $\phi_\neg = \neg n$ for $n\in M$.
  \end{enumerate}
\end{lemma}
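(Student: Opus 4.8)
The plan is to prove each of the three identities directly by unfolding the definition of the incidence relation $I_\phi$, namely $(g,\phi)\in I_\phi \iff g^I \models \phi$, and then translating the model relation $\models$ for each connective into set-theoretic operations on the object extents $\{m\}^I$. Throughout I will repeatedly use the basic fact that for a single object $g$ and a propositional variable $m\in M$ we have $g^I \models m \iff (g,m)\in I \iff g\in\{m\}^I$, since $g^I$ is precisely the set of attributes incident to $g$.

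\begin{proof}[Proof plan]
For part~i), I unfold $\{\phi_\wedge\}^{I_{\phi_\wedge}} = \{g\in G \mid (g,\phi_\wedge)\in I_{\phi_\wedge}\} = \{g\in G\mid g^I\models\phi_\wedge\}$. Since $\phi_\wedge$ is a conjunction over its variables $\Var(\phi_\wedge)$, the semantics of $\wedge$ gives $g^I\models\phi_\wedge$ iff $g^I\models m$ for \emph{every} $m\in\Var(\phi_\wedge)$, i.e.\ iff $g\in\{m\}^I$ for all such $m$, i.e.\ iff $g\in\bigcap_{m\in\Var(\phi_\wedge)}\{m\}^I$. That intersection is exactly the attribute derivation $\Var(\phi_\wedge)^I$ by the definition $A'=\{g\mid\forall m\in A:(g,m)\in I\}$, which closes the case. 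Part~ii) is the dual argument: unfolding the incidence and using that $\vee$ is satisfied iff \emph{some} disjunct is, $g^I\models\phi_\vee$ iff $g\in\{m\}^I$ for some $m\in\Var(\phi_\vee)$, i.e.\ $g\in\bigcup_{m\in\Var(\phi_\vee)}\{m\}^I$, as claimed.

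For part~iii), $\phi_\neg=\neg n$ has a single variable $n\in M$, so $g^I\models\neg n$ holds iff $g^I\not\models n$, i.e.\ iff $g\notin\{n\}^I$, i.e.\ iff $g\in G\setminus\{n\}^I$; unfolding the derivation of $\{\phi_\neg\}$ then yields the stated complement. In each case the argument is a chain of iff-equivalences between ``$g$ belongs to the derivation of the compound attribute'' and ``$g$ belongs to the corresponding Boolean combination of the single-attribute extents,'' so the two sets coincide by extensionality.
\end{proof}

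The main subtlety to handle carefully is the restriction $\phi_\wedge\in\mathcal{L}(M,\{\wedge\})$, $\phi_\vee\in\mathcal{L}(M,\{\vee\})$, and $\phi_\neg\in\mathcal{L}(M,\{\neg\})$: because each formula uses only \emph{one} connective type, the clean collapse to a single intersection, union, or complement is legitimate, and the standing hypothesis $|\Var(\phi)|>0$ rules out the degenerate empty-formula case. I do not expect a genuine obstacle here; the only thing to watch is that the syntactic variable-set $\Var(\phi_\wedge)$ really does index the conjuncts (repetitions or parenthesisation of the same variable are absorbed by idempotency of $\cap$ and $\cup$), so the translation to $\Var(\phi_\wedge)^I$ and to the union over $\Var(\phi_\vee)$ is faithful regardless of how $\phi$ is written. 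For $\neg$ the single-variable assumption $\phi_\neg=\neg n$ is given outright, so no such bookkeeping is needed.
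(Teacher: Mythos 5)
Your proposal is correct and follows essentially the same route as the paper's proof: unfold the incidence $(g,\phi)\in I_\phi\iff g^I\models\phi$ and translate the semantics of each connective into the intersection $\Var(\phi_\wedge)^I$, the union over $\Var(\phi_\vee)$, and the set complement, respectively. The only cosmetic difference is that you phrase each case as a chain of equivalences while the paper argues both implications separately, and your remark about repeated variables being absorbed by idempotency is a small bookkeeping point the paper leaves implicit.
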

\begin{proof}
  \begin{inparaenum}[i)]
  \item For $g\in G$ if $gI_{\phi_\wedge}\phi_{\wedge}$, then
    $\{g\}^I\models \phi_\wedge$ and thereby $\Var(\phi_{\wedge})\subseteq
    \{g\}^{I}$. Hence $g\in \Var(\phi_{\wedge})^{I}$. In case
    $(g,\phi_{\wedge})\not\in I_{\phi_\wedge}$ it holds that $\Var(\phi_{\wedge})\not\subseteq
    \{g\}^{I}$ and thereby $g\not\in \Var(\phi_{\wedge})^{I}$.
  \item For $g\in G$ if $gI_{\phi_\vee}\phi_\vee$ we have
    $\{g\}^I\models \phi_{\vee}$. Hence, $\exists m\in
    \Var(\phi_{\vee})$ with $g\in m^{I}$ and therefore $g$ is in the
    union. If $(g,\phi_{\vee})\not\in I_{\phi_\vee}$ there does not exists such
    a $m\in \Var(\phi_{\wedge})$ and $g\not\in\bigcup_{m\in \Var(\phi)}m^{I}$.
  \item For any $n\in M$ we have $\phi_\neg = \neg n$. Hence, for
    $g\in G$ if $gI_{\phi_\neg}\phi_\neg$ we find $g\not\in \{n\}^{I}$.
    Conversely, if $(g,\phi_\neg)\not\in I_{\phi_\neg}$ it follows that $g\in \{n\}^{I}$.
  \end{inparaenum}\qed
\end{proof}

Naturally, the results from the lemma above generalize to scale
contexts with more than one logical expression in the set of
attributes. How this is done is demonstrated
in~\cref{sec:apos}. Moreover, more complex formulas, i.e.,
$\phi\in\mathcal{L}(M,\{\wedge,\vee,\neg\})$, can be recursively
deconstructed and then treated with \cref{lem:deri}.  In particular,
with respect to unsupervised machine learning, we may mention the
connection to the task of clustering attributes, as studied by Kwuida
et al.~\cite{leonardOps}.

\begin{proposition}[Logical Scale-Measure]\label{prop:logiattr}
  Let $\context$ be a formal context and let $\phi \in
  \mathcal{L}(M,\{\wedge, \vee, \neg\})$, then $\id_G$ is a $(G,
  \{\phi\},I_\phi)$-measure of $\context$ iff
  $\{\phi\}^{I_{\phi}}\in \Ext(\context)$.
\end{proposition}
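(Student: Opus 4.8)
The plan is to reduce the claim to a computation of the extents of the one-attribute scale context and then apply \cref{def:sm} directly, exploiting that the preimage under the identity is trivial. Write $\Scon_\phi \coloneqq (G,\{\phi\},I_\phi)$ for the scale context appearing in the statement.

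First I would determine $\Ext(\Scon_\phi)$. Since $\Scon_\phi$ has exactly one attribute, the only attribute sets are $\emptyset$ and $\{\phi\}$, and their derivations are $\emptyset^{I_\phi}=G$ and $\{\phi\}^{I_\phi}=\{g\in G\mid g^I\models\phi\}$. As every extent arises as the derivation of an attribute set, this already gives $\Ext(\Scon_\phi)=\{G,\{\phi\}^{I_\phi}\}$, the two members coinciding precisely when every object of $\context$ models $\phi$.

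Second I would unfold the definition of a scale-measure: $\id_G$ is an $\Scon_\phi$-measure of $\context$ exactly when $\id_G^{-1}(A)\in\Ext(\context)$ for every $A\in\Ext(\Scon_\phi)$. Because $\id_G^{-1}$ acts as the identity on subsets of $G$, this requirement is equivalent to the inclusion $\Ext(\Scon_\phi)\subseteq\Ext(\context)$. Third I would observe that $G\in\Ext(\context)$ always holds, being the top extent of any formal context. Hence the inclusion $\{G,\{\phi\}^{I_\phi}\}\subseteq\Ext(\context)$ collapses to the single condition $\{\phi\}^{I_\phi}\in\Ext(\context)$, which yields both directions of the asserted equivalence simultaneously.

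The argument is short, and the only step requiring genuine care is the first one: I must be sure that the single-attribute context admits no extent beyond $G$ and $\{\phi\}^{I_\phi}$, so that the scale-measure condition imposes no hidden extra constraint. Once the extent set of $\Scon_\phi$ is pinned down, the remainder is a routine application of \cref{def:sm} together with the triviality of the identity preimage, so I do not anticipate any further obstacle.
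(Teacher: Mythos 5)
Your proposal is correct and follows essentially the same route as the paper's proof: both identify that the one-attribute scale context has extent set $\{G,\{\phi\}^{I_\phi}\}$, note that the identity preimage is trivial, and use that $G$ is always an extent of $\context$ so only the condition $\{\phi\}^{I_\phi}\in\Ext(\context)$ remains. You are merely somewhat more explicit than the paper in justifying why no further extents can occur.
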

\begin{proof}
  Since $|\{\phi\}|=1$ we find that $(G, \{\phi\},I_\phi)$ has at
  least one and most two possible extents, $\{\{\phi\}^{I_{\phi}},
  G\}$. If the map $\id_{G}$ is a scale-measure of $\context$, then
  $\id_G^{-1}(\{\phi\}^{I_{\phi}}) = \{\phi\}^{I_{\phi}} \in
  \Ext(\context)$.  Conversely, if $ \{\phi\}^{I_{\phi}} \in
  \Ext(\context)$ so is $\id_G^{-1}(\{\phi\}^{I_{\phi}})$, hence,
  $\id_{G}$ is $(G, \{\phi\},I_\phi)$-measure of $\context$. \qed
\end{proof}

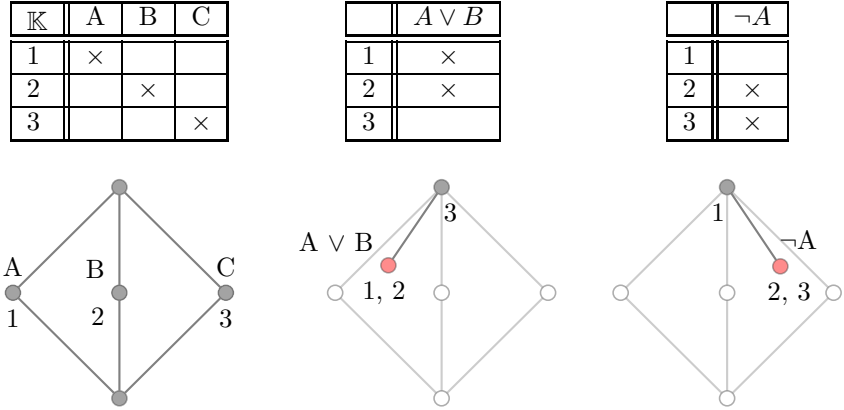
\begin{figure}[t]\label{fig:concl}
  \begin{minipage}{0.32\linewidth}  \centering
  \begin{cxt} 
    \cxtName{$\context$}
    \att{A}
    \att{B}
    \att{C}
    \obj{x..}{1}
    \obj{.x.}{2}
    \obj{..x}{3}
  \end{cxt}    
  \end{minipage}
  \begin{minipage}{0.32\linewidth}  \centering
  \begin{cxt}
    \cxtName{}
    \att{$A\vee B$}
    \obj{x}{1}
    \obj{x}{2}
    \obj{.}{3}
  \end{cxt}    
  \end{minipage}
  \begin{minipage}{0.32\linewidth}  \centering
  \begin{cxt}
    \cxtName{}
    \att{$\neg A$}
    \obj{.}{1}
    \obj{x}{2}
    \obj{x}{3}
  \end{cxt}
  \end{minipage}\\
\ \\
\ \\
  \begin{minipage}{0.32\linewidth}  \centering
    \colorlet{mivertexcolor}{black!80}
\colorlet{jivertexcolor}{black!80}
\colorlet{vertexcolor}{black!80}
\colorlet{bordercolor}{black!80}
\colorlet{linecolor}{gray}
\tikzset{vertexbase/.style={semithick, shape=circle, inner sep=2pt, outer sep=0pt, draw=bordercolor,draw opacity=0.4},%
  vertex/.style={vertexbase, fill=vertexcolor!45},%
  mivertex/.style={vertexbase, fill=mivertexcolor!45},%
  jivertex/.style={vertexbase, fill=jivertexcolor!45},%
  divertex/.style={vertexbase, top color=mivertexcolor!45, bottom color=jivertexcolor!45},%
  conn/.style={-, thick, color=linecolor}%
}
\tikzstyle{s} = [label distance=0.1cm]
\begin{tikzpicture}[scale=1.4]
  \begin{scope} 
    \begin{scope} 
      \foreach \nodename/\nodetype/\xpos/\ypos in {%
        0/vertex/1/0.0,
        1/divertex/0.0/1,
        2/divertex/1/1,
        3/divertex/2/1,
        4/vertex/1/2
      } \node[\nodetype] (\nodename) at (\xpos, \ypos) {};
    \end{scope}
    \begin{scope} 
      \path (1) edge[conn] (4);
      \path (3) edge[conn] (4);
      \path (0) edge[conn] (3);
      \path (2) edge[conn] (4);
      \path (0) edge[conn] (2);
      \path (0) edge[conn] (1);
    \end{scope}
    \begin{scope} 
      \foreach \nodename/\labelpos/\labelopts/\labelcontent in {%
        1/below/s/{1},
        1/above/s/{A},
        2/below left/s/{2},
        2/above left/s/{B},
        3/below/s/{3},
        3/above/s/{C}
      } \coordinate[label={[\labelopts]\labelpos:{\labelcontent}}](c) at (\nodename);
    \end{scope}
  \end{scope}
\end{tikzpicture}    
  \end{minipage}
  \begin{minipage}{0.32\linewidth}  \centering
    \colorlet{mivertexcolor}{red}
\colorlet{jivertexcolor}{black!80}
\colorlet{vertexcolor}{black!80}
\colorlet{bordercolor}{black!80}
\colorlet{linecolor}{gray}
\tikzset{vertexbase/.style={semithick, shape=circle, inner sep=2pt, outer sep=0pt, draw=bordercolor,draw opacity=0.4},%
  vertex/.style={vertexbase, fill=vertexcolor!45},%
  mivertex/.style={vertexbase, fill=mivertexcolor!45},%
  jivertex/.style={vertexbase, fill=jivertexcolor!45},%
  divertex/.style={vertexbase, top color=white, bottom color=white},%
  conn/.style={-, thick, color=linecolor}%
}
\tikzstyle{s} = [label distance=0.1cm]
\begin{tikzpicture}[scale=1.4]
  \begin{scope} 
    \begin{scope} 
      \foreach \nodename/\nodetype/\xpos/\ypos in {%
        0/mivertex/0.5/1.26,
        1/jivertex/1/2,
        2/divertex/1/0.0,
        3/divertex/0.0/1,
        4/divertex/1/1,
        5/divertex/2/1
      } \node[\nodetype] (\nodename) at (\xpos, \ypos) {};
    \end{scope}
    \begin{scope} 
      \path (0) edge[conn] (1);
      \path (3) edge[conn,draw opacity=0.4] (1);
      \path (5) edge[conn,draw opacity=0.4] (1);
      \path (2) edge[conn,draw opacity=0.4] (5);
      \path (4) edge[conn,draw opacity=0.4] (1);
      \path (2) edge[conn,draw opacity=0.4] (4);
      \path (2) edge[conn,draw opacity=0.4] (3);
    \end{scope}
    \begin{scope} 
      \foreach \nodename/\labelpos/\labelopts/\labelcontent in {%
        0/below/s/{1, 2\ \ },
        0/above left/s/{A $\vee$ B},
        1/below/s/{\ \ 3}
      } \coordinate[label={[\labelopts]\labelpos:{\labelcontent}}](c) at (\nodename);
    \end{scope}
  \end{scope}
\end{tikzpicture}    
  \end{minipage}
  \begin{minipage}{0.32\linewidth}  \centering
    \colorlet{mivertexcolor}{red}
\colorlet{jivertexcolor}{black!80}
\colorlet{vertexcolor}{black!80}
\colorlet{bordercolor}{black!80}
\colorlet{linecolor}{gray}
\tikzset{vertexbase/.style={semithick, shape=circle, inner sep=2pt, outer sep=0pt, draw=bordercolor,draw opacity=0.4},%
  vertex/.style={vertexbase, fill=vertexcolor!45},%
  mivertex/.style={vertexbase, fill=mivertexcolor!45},%
  jivertex/.style={vertexbase, fill=jivertexcolor!45},%
  divertex/.style={vertexbase, top color=white, bottom color=white},%
  conn/.style={-, thick, color=linecolor}%
}
\tikzstyle{s} = [label distance=0.1cm]
\begin{tikzpicture}[scale=1.4]
  \begin{scope} 
    \begin{scope} 
      \foreach \nodename/\nodetype/\xpos/\ypos in {%
        0/mivertex/1.5/1.25,
        1/jivertex/1/2,
        2/divertex/1/0.0,
        3/divertex/0.0/1,
        4/divertex/1/1,
        5/divertex/2/1
      } \node[\nodetype] (\nodename) at (\xpos, \ypos) {};
    \end{scope}
    \begin{scope} 
      \path (0) edge[conn] (1);
      \path (3) edge[conn,draw opacity=0.4] (1);
      \path (5) edge[conn,draw opacity=0.4] (1);
      \path (2) edge[conn,draw opacity=0.4] (5);
      \path (4) edge[conn,draw opacity=0.4] (1);
      \path (2) edge[conn,draw opacity=0.4] (4);
      \path (2) edge[conn,draw opacity=0.4] (3);
    \end{scope}
    \begin{scope} 
      \foreach \nodename/\labelpos/\labelopts/\labelcontent in {%
        0/below/s/{\ \ 2, 3},
        0/above/s/{\ \ \ \ $\neg$A},
        1/below/s/{1\ \ }
      } \coordinate[label={[\labelopts]\labelpos:{\labelcontent}}](c) at (\nodename);
    \end{scope}
  \end{scope}
\end{tikzpicture}
  \end{minipage}
  \caption{Counter examples for which $\id_G$ is not a
    $(G, \{\phi_\vee\},I_{\phi_\vee})$- or
    $(G, \{\phi_\neg\},I_{\phi_\neg})$-measure of a $\context$. The
    conflicting extents are marked in red.}
  \label{fig:counter}
\end{figure}

This result raises the question for which formulas $\phi$ is $\id_{G}$
a $(G, \{\phi\},I_\phi)$-measure of $\context$. Counter examples for
which $\id_G$ is not a $(G, \{\phi_\vee\},I_{\phi_\vee})$- or $(G,
\{\phi_\neg\},I_{\phi_\neg})$-measure of a $\context$ are depicted in
\cref{fig:counter}.

\begin{corollary}[Conjunctive Logical Scale-Measures]\label{prop:clattr}
  Let $\context=(G,M,I)$ be a formal context and $\phi_\wedge\in
  \mathcal{L}(M,\{\wedge\})$, then
  $(\id_G,(G,\{\phi_\wedge\},I_{\phi_{\wedge}}))\in \Sh(\context)$.
\end{corollary}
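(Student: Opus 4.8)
The plan is to deduce the statement directly from the characterisation already established in \cref{prop:logiattr}, which asserts that $\id_G$ is a $(G,\{\phi\},I_\phi)$-measure of $\context$ if and only if the single-attribute derivation $\{\phi\}^{I_\phi}$ lies in $\Ext(\context)$. Thus it suffices to show that the one attribute $\phi_\wedge$ of the scale context $(G,\{\phi_\wedge\},I_{\phi_\wedge})$ derives to an extent of $\context$, and everything else is handed to us by the preceding results.

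First I would invoke \cref{lem:deri}~i), which computes this derivation explicitly as $\{\phi_\wedge\}^{I_{\phi_\wedge}}=\Var(\phi_\wedge)^{I}$. Here $\Var(\phi_\wedge)\subseteq M$ is a nonempty set of attributes of $\context$ (nonempty by the standing assumption $|\Var(\phi_\wedge)|>0$), so the right-hand side is simply the attribute derivation of an attribute set of $\context$.

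The key -- and essentially only -- observation is then that the derivation $A^{I}$ of \emph{any} attribute set $A\subseteq M$ is always an extent of $\context$. This is the standard Galois-connection fact $A^{I}=A^{III}$ (equivalently $A'=A'''$): since $A^{I}$ equals $(A^{II})^{I}$, it is the derivation of the attribute set $A^{II}$ and is therefore fixed by the extent closure operator $\cdot''$, whence $A^{I}\in\Ext(\context)$. Applying this with $A=\Var(\phi_\wedge)$ yields $\{\phi_\wedge\}^{I_{\phi_\wedge}}=\Var(\phi_\wedge)^{I}\in\Ext(\context)$.

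Combining these two steps, \cref{prop:logiattr} immediately gives that $\id_G$ is a $(G,\{\phi_\wedge\},I_{\phi_\wedge})$-measure of $\context$, i.e.\ $(\id_G,(G,\{\phi_\wedge\},I_{\phi_\wedge}))\in\Sh(\context)$, as claimed. I do not expect any genuine obstacle: the entire content is prepackaged in \cref{prop:logiattr} and \cref{lem:deri}, and the remaining ingredient is only the elementary fact that attribute derivations are extents. The sole point worth flagging is the contrast with the disjunctive and negated cases (cf.\ \cref{fig:counter}): there the analogous derivations from \cref{lem:deri}~ii) and~iii), namely a union of attribute extents respectively a complement $G\setminus\{n\}^{I}$, need not be closed, which is precisely why the conjunctive case behaves uniformly well and the corollary holds without further hypotheses.
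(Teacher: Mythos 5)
Your proposal is correct and follows the paper's own proof exactly: apply \cref{lem:deri}~i) to identify $\{\phi_\wedge\}^{I_{\phi_\wedge}}$ with $\Var(\phi_\wedge)^{I}$ and then conclude via \cref{prop:logiattr}. You merely make explicit the step the paper leaves implicit, namely that any attribute derivation $A^{I}$ is an extent of $\context$, which is a welcome (and correct) addition.
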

\begin{proof}
  According to \cref{lem:deri} $(\phi_\wedge)^{I_{\phi_{\wedge}}} =
  \Var(\phi)^{I}$, hence, by \cref{prop:logiattr}
  $(\id_G,(G,\{\phi_\wedge\},I_{\phi_{\wedge}}))\in
  \Sh(\context)$.\qed
\end{proof}

\subsection{Context Apposition for Scale Construction}\label{sec:apos}
To build more complex scale-measures we employ the apposition operator
of contexts and transfer it to the realm of scale-measures. We remind
the reader that the apposition of two contexts $\context_1,\context_1$
with $G_1=G_2$ and $M_1\cap M_2 = \emptyset$ is defined as $\context_1
\app \context_2\coloneqq (G,M_1\cup M_2,I_1\cup
I_2)$. The set of extents of $\context_1 \app \context_2$ is
known to be the set of all pairwise extents of $\context_1$ and
$\context_2$. In the case of $M_{1}\cap M_{2}\neq \emptyset$ the
apposition is defined alike by coloring the attribute sets. 

\begin{definition}[Apposition of Scale-Measures]\label{def:app}
  Let $(\sigma,\Scon),(\psi,\Tcon)$ be scale-measures of
  $\context$. Then the \emph{apposition of scale-measures}
  $(\sigma,\Scon)\app(\psi,\Tcon)$ is:
  \begin{equation*}
    (\sigma,\Scon)\app(\psi,\Tcon)\coloneqq
    \begin{dcases}
      (\sigma, \Scon\,\app\Tcon)&\text{if}\ G_\Scon=G_\Tcon, \sigma =
      \psi\\
      (\sigma,\Scon)\vee(\psi,\Tcon)&\text{else}
    \end{dcases}
  \end{equation*}
\end{definition}

Note that also in the case of $G_\Scon=G_\Tcon, \sigma =\psi$ is the
scale-measure apposition is as well a join up to equivalence in the
scale-hierarchy, cf.~\cref{prop:lattice}. 

\begin{proposition}[Apposition Scale-Measure]\label{prop:app}
  Let $(\sigma,\Scon),(\psi,\Tcon)$ be two scale-measures of
  $\context$. Then $(\sigma,\Scon)\app(\psi,\Tcon)\in \Sh(\context)$.
\end{proposition}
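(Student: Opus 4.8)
The plan is to prove that the apposition of two scale-measures is again a scale-measure by splitting into the two cases of \cref{def:app} and reducing each to results already established. In the second case, where $G_\Scon \neq G_\Tcon$ or $\sigma \neq \psi$, the apposition is \emph{defined} to be the join $(\sigma,\Scon)\vee(\psi,\Tcon)$, and by \cref{prop:lattice} together with the lattice structure of $\SH(\context)$ this join is itself an element of the scale-hierarchy, hence a scale-measure of $\context$. So this case requires no work beyond citing the lattice order and the explicit join formula.

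The substantive case is $G_\Scon = G_\Tcon$ and $\sigma = \psi$, where the apposition is the genuine context apposition $(\sigma, \Scon\app\Tcon)$. Here the main task is to verify that $\sigma$ is an $(\Scon\app\Tcon)$-measure, i.e.\ that $\sigma^{-1}(A) \in \Ext(\context)$ for every $A \in \Ext(\Scon\app\Tcon)$. The key fact I would invoke is the one recalled just before \cref{def:app}: the extents of an apposition are exactly the pairwise intersections of extents of the factors, that is, $\Ext(\Scon\app\Tcon) = \{A_\Scon \cap A_\Tcon \mid A_\Scon \in \Ext(\Scon),\ A_\Tcon \in \Ext(\Tcon)\}$. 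First I would take an arbitrary $A \in \Ext(\Scon\app\Tcon)$ and write it as $A = A_\Scon \cap A_\Tcon$ with $A_\Scon \in \Ext(\Scon)$ and $A_\Tcon \in \Ext(\Tcon)$.

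Next I would use the elementary set-theoretic identity for preimages, $\sigma^{-1}(A_\Scon \cap A_\Tcon) = \sigma^{-1}(A_\Scon) \cap \sigma^{-1}(A_\Tcon)$, which holds for any map. Because $(\sigma,\Scon)$ and $(\psi,\Tcon) = (\sigma,\Tcon)$ are both scale-measures of $\context$, the preimages $\sigma^{-1}(A_\Scon)$ and $\sigma^{-1}(A_\Tcon)$ are both extents of $\context$ by \cref{def:sm}. Since $\Ext(\context)$ is a closure system and therefore closed under intersection, their intersection $\sigma^{-1}(A_\Scon) \cap \sigma^{-1}(A_\Tcon) = \sigma^{-1}(A)$ is again an extent of $\context$. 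As $A$ was arbitrary, this establishes that $\sigma$ is an $(\Scon\app\Tcon)$-measure, completing this case.

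The step I expect to be the only real obstacle is justifying the extent description of the apposition when $M_\Scon \cap M_\Tcon \neq \emptyset$, since then the attributes are combined with coloring rather than a disjoint union. I would note that coloring the shared attributes makes the attribute sets formally disjoint without altering the incidence of any object, so the pairwise-intersection characterization of $\Ext(\Scon\app\Tcon)$ carries over verbatim; this is exactly why the standard apposition result applies in full generality here. With that remark in place, both cases are dispatched and the proposition follows.
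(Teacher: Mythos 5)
Your proposal is correct and follows essentially the same route as the paper's proof: the non-trivial case reduces to the pairwise-intersection description of $\Ext(\Scon\app\Tcon)$, the identity $\sigma^{-1}(A\cap B)=\sigma^{-1}(A)\cap\psi^{-1}(B)$, and closure of $\Ext(\context)$ under intersection, while the other case is delegated to \cref{prop:lattice}. Your extra remark on coloring overlapping attribute sets is a welcome clarification the paper leaves implicit.
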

\begin{proof}
  \begin{inparaenum}
  \item In the first case we know that set of extents $\Ext(\Scon\mid
    \Tcon)$ contains all intersections $A\cap B$ for $A\in
    \Ext(\Scon)$ and $B\in \Ext(\Tcon)$ \cite{fca-book}. Furthermore,
    we know that we can represent $\sigma^{-1}(A\cap
    B)=\sigma^{-1}(A)\cap \sigma^{-1}(B)=\sigma^{-1}(A)\cap
    \psi^{-1}(B)$. Since
    $\sigma^{-1}(\Ext(\Scon)),\psi^{-1}(\Ext(\Tcon))\subseteq
    \Ext(\context)$, we can infer that the intersection
    $\sigma^{-1}(A)\cap \psi^{-1}(B)\in\Ext(\context)$.
  \item The second case follows from~\cref{prop:lattice}.
  \end{inparaenum}\qed
\end{proof}

The apposition operator combines two scale-measures, and therefore two
views, on a data context to a new single one. We may note that the
special case of $(\sigma,\Scon)=(\id_G,\context)$ was already discussed
by Ganter and Wille~\cite{fca-book}.

\begin{proposition}\label{prop:attr}
  Let $\context = (G,M,I)$ and $\Scon =
  (G_{\Scon},M_{\Scon},I_{\Scon})$ be two formal contexts and $\sigma:
  G \to G_{\Scon}$, then TFAE:
  \begin{enumerate}[i)]
  \item $\sigma \text{ is a } \Scon\text{-measure of }\context$
  \item $\sigma \text{ is a } (G_{\Scon},\{n\},I_{\Scon}\cap
    (G_{\Scon}\times\{n\}))\text{-measure of }\context\ \text{for
      all}\ n\in M_{\Scon} $
  \end{enumerate}
\end{proposition}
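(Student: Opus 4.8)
The plan is to reduce everything to intersections of single-attribute extents. The three ingredients I would use are: (a) in $\Scon$ every extent is an intersection of the attribute extents $\{n\}^{I_\Scon}$; (b) taking preimages under $\sigma$ commutes with intersection; and (c) $\Ext(\context)$ is a closure system and hence closed under intersection.

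First I would pin down the single-attribute scale context $\Scon_n \coloneqq (G_\Scon,\{n\},I_\Scon\cap(G_\Scon\times\{n\}))$. Having a singleton attribute set, it admits at most the two extents $G_\Scon$ and $\{n\}^{I_\Scon}$. Since $\sigma^{-1}(G_\Scon)=G\in\Ext(\context)$ holds unconditionally, the statement ``$\sigma$ is an $\Scon_n$-measure'' is equivalent, via \cref{def:sm}, to the single requirement $\sigma^{-1}(\{n\}^{I_\Scon})\in\Ext(\context)$. Thus condition (ii) says exactly that $\sigma^{-1}(\{n\}^{I_\Scon})\in\Ext(\context)$ for every $n\in M_\Scon$.

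For (i)$\Rightarrow$(ii) I would observe that each $\{n\}^{I_\Scon}$ is an extent of $\Scon$, so a $\Scon$-measure $\sigma$ sends it back into $\Ext(\context)$ by definition, which is precisely (ii). For the converse (ii)$\Rightarrow$(i), I would take an arbitrary $A\in\Ext(\Scon)$ and write it through the double derivation as $A=A^{I_\Scon I_\Scon}=\bigcap_{n\in A^{I_\Scon}}\{n\}^{I_\Scon}$, the standard fact that every extent is an intersection of attribute extents. Applying $\sigma^{-1}$ and distributing over the intersection yields $\sigma^{-1}(A)=\bigcap_{n\in A^{I_\Scon}}\sigma^{-1}(\{n\}^{I_\Scon})$; each factor is an extent of $\context$ by (ii), and closure of $\Ext(\context)$ under intersection then forces $\sigma^{-1}(A)\in\Ext(\context)$. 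As $A$ was arbitrary, $\sigma$ is a $\Scon$-measure.

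The one point that needs care is the empty/top case: when $A=G_\Scon$ the index set $A^{I_\Scon}$ can be empty, and the intersection must be read as the top element $G_\Scon$, whose preimage $G$ is an extent in any case. I would therefore either handle $A=G_\Scon$ separately or adopt the convention that the empty intersection equals $G_\Scon$, so that the intersection argument applies uniformly; the remaining verification that $\sigma^{-1}$ distributes over intersections is routine. I note in passing that the statement can alternatively be read off from the apposition machinery of \cref{prop:app}, since $\Scon$ is the apposition of its single-attribute subcontexts, but the direct intersection argument above is self-contained.
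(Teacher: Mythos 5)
Your proof is correct, and for the direction (ii)$\Rightarrow$(i) it takes a genuinely different (and more self-contained) route than the paper. The paper proves (i)$\Rightarrow$(ii) by contraposition --- assuming some $\hat n$ fails and noting that $\{\hat n\}^{J}$ is an extent of $\Scon$ --- which is the same observation you make directly; no real difference there. For the converse, however, the paper invokes the apposition machinery: it writes $\Scon$ as the apposition $\app_{n\in M_{\Scon}}(G_{\Scon},\{n\},I_{\Scon}\cap(G_{\Scon}\times\{n\}))$ and applies \cref{prop:app}, which states that appositions of scale-measures are again scale-measures. You instead unpack the underlying mechanism explicitly: every extent $A\in\Ext(\Scon)$ is the intersection $\bigcap_{n\in A^{I_{\Scon}}}\{n\}^{I_{\Scon}}$ of attribute extents, preimages commute with intersection, and $\Ext(\context)$ is intersection-closed. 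Since the proof of \cref{prop:app} itself rests on exactly these facts (extents of an apposition are pairwise intersections, and $\sigma^{-1}(A\cap B)=\sigma^{-1}(A)\cap\sigma^{-1}(B)$), the two arguments coincide mathematically; what yours buys is independence from \cref{def:app} and \cref{prop:app}, which is a genuine advantage here because the paper's chain \cref{prop:attr}$\to$\cref{prop:app} could otherwise look circular at first glance (it is not, but your version avoids the question entirely). Your explicit treatment of the empty-index case $A=G_{\Scon}$ is a detail the paper glosses over and is handled correctly. The only cosmetic caveat: your reduction of condition (ii) to ``$\sigma^{-1}(\{n\}^{I_{\Scon}})\in\Ext(\context)$'' tacitly uses that the one-attribute context has at most the two extents $G_{\Scon}$ and $\{n\}^{I_{\Scon}}$, which you do state and which matches the paper's phrasing ``the only non-trivial extent.''
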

\begin{proof}
  \begin{description}
  \item[$(i)\Rightarrow (ii):$] Assume $\hat n\in M_{\Scon}$ s.t.\
    $\sigma$ is not a $(G_{\Scon},\{\hat n\},
    \overbrace{I_{\Scon}\cap(G_{\Scon}\times \{\hat
      n\})}^{J})$-measure of $\context$. Then the only non-trivial
    extent $\{\hat n\}^{J}$ has a preimage $\sigma^{-1}(\{\hat
    n\}^{J})\not\in \Ext(\context)$. Since $\{\hat n\}^{J}\in
    \Ext(\Scon)$ we can conclude that $\sigma$ is not a $\Scon$-measure of
    $\context$.
  \item[$(ii)\Rightarrow (i):$] From~\cref{prop:app} follows
    $\app_{n\in M_{\Scon}}(\sigma,(G_{\Scon}
    ,\{n\},I_{\Scon}\cap(G_{\Scon}\times\{n\})))$ is again a
    scale-measure. Furthermore, by~\cref{def:app} we know that 
    $\Scon=\app_{n\in M_{\Scon}}(G_{\Scon}
    ,\{n\},I_{\Scon}\cap(G_{\Scon}\times\{n\}))$.\qed
  \end{description}
\end{proof}

\begin{corollary}[Deciding the Scale-measure Problem]
  \label{cor:decide}
  Given a formal context $(G,M,I)$ and scale-context
  $\Scon\coloneqq(G_{\Scon},M_{\Scon},I_{\Scon})$ and a map $\sigma:G\to
  G_{\Scon}$, deciding if $(\sigma,\Scon)$ is a scale-measure of
  $\context$ is in $P$. More specifically, to answer this question does
  require $O(|M_{\Scon}|\cdot|G_{\Scon}|\cdot|G|\cdot|M|)$.
\end{corollary}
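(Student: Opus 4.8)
The plan is to turn the global scale-measure test into a collection of cheap, single-attribute tests by means of \cref{prop:attr}. That proposition states that $(\sigma,\Scon)$ is a $\Scon$-measure of $\context$ if and only if, for every $n\in M_{\Scon}$, the map $\sigma$ is a measure of the one-attribute scale context $\Scon_n\coloneqq(G_{\Scon},\{n\},I_{\Scon}\cap(G_{\Scon}\times\{n\}))$. Thus it suffices to give an algorithm deciding one such single-attribute condition and to run it for all $|M_{\Scon}|$ attributes, summing the costs.

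For the single-attribute test I would first note that $\Scon_n$ has at most two extents, namely the full object set $G_{\Scon}$ and the column $A_n\coloneqq\{g\in G_{\Scon}\mid (g,n)\in I_{\Scon}\}$. Since $\sigma^{-1}(G_{\Scon})=G$ is always an extent of $\context$, the only nontrivial condition is whether the preimage $\sigma^{-1}(A_n)$ is an extent of $\context$. Computing $A_n$ scans the $n$-column of $I_{\Scon}$ in $O(|G_{\Scon}|)$ steps, and forming $\sigma^{-1}(A_n)=\{g\in G\mid \sigma(g)\in A_n\}$ uses one membership test per object, i.e.\ $O(|G|)$ steps.

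It remains to decide whether a concrete set $B\subseteq G$ is an extent, which by \cref{def:sm} (and the definition of the closure $''$) is equivalent to $B''=B$. Evaluating the derivation $B'$ costs $O(|G|\cdot|M|)$ and the second derivation $B''$ the same, so the membership-in-$\Ext(\context)$ test runs in $O(|G|\cdot|M|)$. Hence one single-attribute test costs $O(|G_{\Scon}|+|G|\cdot|M|)$, and running it over all attributes gives $O\bigl(|M_{\Scon}|\cdot(|G_{\Scon}|+|G|\cdot|M|)\bigr)$. Using $|G|\cdot|M|\ge1$ and $|G_{\Scon}|\ge1$ to dominate each summand by the full product yields the stated bound $O(|M_{\Scon}|\cdot|G_{\Scon}|\cdot|G|\cdot|M|)$; in particular every quantity is polynomial in the input size, so the decision problem lies in $P$.

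I do not expect a genuine obstacle: all the conceptual weight is carried by the reduction in \cref{prop:attr}, which incurs no computational cost. The only points demanding care are verifying that each $\Scon_n$ truly has $A_n$ as its sole nontrivial extent (so that a single preimage must be checked) and confirming that the extent test collapses to the two linear passes of the derivation operator, giving the $O(|G|\cdot|M|)$ closure cost rather than anything larger.
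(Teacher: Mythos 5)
Your proposal is correct and follows exactly the route the paper intends: the corollary is stated as an immediate consequence of \cref{prop:attr}, reducing the decision to one closure test of cost $O(|G|\cdot|M|)$ per attribute of $\Scon$ on the preimage of its single nontrivial extent, which is precisely your single-attribute decomposition. Your accounting is in fact slightly sharper, giving $O(|M_{\Scon}|\cdot(|G_{\Scon}|+|G|\cdot|M|))$ before relaxing to the stated product bound.
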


We may not that this result is favorable since the naive solution
would be to compute $\Ext(\Scon)$, which is potentially exponential in
the size of $\Scon$, and checking all its elements in $\context$ for
their closure, which consumes $O(|G|\cdot|M|)$ for all
$A\in\Ext(\Scon)$. Moreover, if the formal context $\context$ is fixed
as well as $G_{\Scon}$, the computational cost for deciding the
scale-measure problem grows linearly in $|M_{\Scon}|$. Altogether,
this enables a feasible navigation in the scale-hierarchy. 

\begin{corollary}[Attribute Projection]\label{prop:projection}
  Let $\context=(G,M,I)$ be a formal context, $M_{\Scon}\subseteq M$,
  and $I_{\Scon}\coloneqq I\cap (G\times M_{\Scon})$, then
  $\sigma=\id_G$ is a $(G,M_{\Scon},I_{\Scon})$-measure of $\context$.
\end{corollary}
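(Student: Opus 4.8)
The plan is to deduce the statement from the attribute-wise characterization and the conjunctive scale-measure result established immediately before it, rather than argue from scratch. By \cref{def:sm}, proving that $\id_G$ is a $(G,M_{\Scon},I_{\Scon})$-measure of $\context$ means verifying that for every extent $A\in\Ext(G,M_{\Scon},I_{\Scon})$ the preimage $\id_G^{-1}(A)=A$ again belongs to $\Ext(\context)$. Since the scale and the original context share the object set $G$ and the map is the identity, this is exactly the question of whether every extent of the restricted context is an extent of $\context$.

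First I would invoke \cref{prop:attr}: as $\sigma=\id_G$ is a map from $G$ to $G$, it is a $(G,M_{\Scon},I_{\Scon})$-measure of $\context$ if and only if it is a single-attribute measure into $(G,\{n\},I_{\Scon}\cap(G\times\{n\}))$ for every $n\in M_{\Scon}$. This reduces the global continuity condition to an attribute-wise one. The key bookkeeping observation is that for $n\in M_{\Scon}$ the single-attribute incidence of the scale coincides with that of $\context$, namely $I_{\Scon}\cap(G\times\{n\})=I\cap(G\times M_{\Scon})\cap(G\times\{n\})=I\cap(G\times\{n\})$, so the relevant column is literally the $n$-column of $\context$.

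Next I would read each such $n$ as the trivial conjunctive formula $\phi_\wedge=n\in\mathcal{L}(M,\{\wedge\})$ with $\Var(\phi_\wedge)=\{n\}$, for which \cref{lem:deri} gives $\{\phi_\wedge\}^{I_{\phi_\wedge}}=\{n\}^{I}$. Because $\{n\}^{I}$ is the derivation of an attribute set it always lies in $\Ext(\context)$, so \cref{prop:logiattr}, or directly \cref{prop:clattr}, yields that $\id_G$ is a $(G,\{n\},I\cap(G\times\{n\}))$-measure of $\context$. Feeding this back through the implication $(ii)\Rightarrow(i)$ of \cref{prop:attr}, for all $n\in M_{\Scon}$, closes the argument.

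There is essentially no hard step here, which is why the statement is phrased as a corollary; the only point requiring care is the incidence identity $I_{\Scon}\cap(G\times\{n\})=I\cap(G\times\{n\})$, which certifies that discarding attributes outside $M_{\Scon}$ does not disturb the incidence on the retained columns. For a fully self-contained alternative one may bypass the lemmas entirely: every extent of $(G,M_{\Scon},I_{\Scon})$ has the form $B^{I_{\Scon}}$ for some $B\subseteq M_{\Scon}$, and since $I_{\Scon}$ and $I$ agree on $M_{\Scon}$ one has $B^{I_{\Scon}}=B^{I}\in\Ext(\context)$, whence the preimage under $\id_G$ is again an extent of $\context$.
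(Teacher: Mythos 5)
Your proof is correct and follows essentially the same route as the paper: both reduce to single-attribute scales via \cref{prop:attr} and reassemble with the apposition/equivalence in \cref{prop:attr} and \cref{prop:app}, the only (cosmetic) difference being that the paper certifies the single-attribute case by applying $(i)\Rightarrow(ii)$ to the trivial scale-measure $(\id_G,\context)$, whereas you certify it via \cref{lem:deri} and \cref{prop:clattr} with the one-variable formula $\phi_\wedge=n$ — both ultimately rest on $\{n\}^{I}\in\Ext(\context)$ and on the incidence identity $I_{\Scon}\cap(G\times\{n\})=I\cap(G\times\{n\})$. Your closing self-contained argument ($B^{I_{\Scon}}=B^{I}$ for $B\subseteq M_{\Scon}$) is also valid and arguably the most direct justification.
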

\begin{proof}
  The map $id_G$ is a $\context$-measure of $\context$, hence $id_G$
  is a $(G,\{n\},I\cap(G\times \{n\}))$-measure of $\context$ for
  every $n\in M$, and in particular $n\in M_{\Scon}$, by
  \cref{prop:attr}, leading to $(\id_{G},(G,M_{\Scon},I_{\Scon}))$
  being a scale-measure of $\context$, cf.~\cref{prop:app}.\qed
\end{proof}

Due to duality one may also investigate an object projection based on
the just presented attribute projection. However, an investigation of
dualities in the realm of scale-measures is deemed future work.
Combining our results on scale-measure apposition (\cref{prop:app})
with the logical attributes (\cref{prop:logiattr}) we are now tackle
the navigation problem as stated in \cref{problem:navi}.

When we look at this problem again, we find that in its generality it
does not always permit a solution. For example, consider the
well-known Boolean formal context
$\mathbb{B}_{n}\coloneqq([n],[n],\neq)$, a standard scale context,
where $[n]\coloneqq\{1,\dotsc,n\}$ and $n>2$. This context allows a
scale-measure into the standard nominal scale
$\mathbb{N}_{n}\coloneqq([n],[n],=)$, the map $\id_{[n]}$. Restricted
to any disjunctive combination of attributes, i.e.,
$M_{\Tcon}\subseteq\mathcal{L}(M,\{\vee\})$, the afore mentioned
scale-measure does not have an equivalent logical scale-measure
$(\psi,\Tcon\coloneqq([n],M_{\Tcon},I_{\Tcon}))$. This is due to the
fact that
\begin{inparaenum}
\item in nominal contexts there is for every object $g$ there is an attribute
$m$, such that ${m}'={g}$, also $|{m}'|=1$,
\item all attribute derivations in Boolean context $\mathbb{B}_{n}$
  are of cardinality $n-1$, 
\item the derivation of a disjunctive formula (over $[n]$) is the
  union of the elemental attribute derivations (\cref{lem:deri}).
\end{inparaenum} Hence, the derivation of an disjunctive formula is
at least of cardinality $n-1$ in $\Tcon$ and therefore there must not
exist an $m\in M_{\Tcon}$ such that $|\{m\}^{I_{\Tcon}}|=1$, and
therefore $\Ext(\mathbb{N})\neq \Ext(\Tcon)$.

Despite this result, we may also report positive answers for
particular instances of~\cref{problem:navi} that use conjunctive
formulas for $M_{\Tcon}$.

\begin{proposition}[Conjunctive Normalform of Scale-measures] \label{lem:appconst}
  Let $\context$ be a context, $(\sigma,\Scon)\in \Sh(\context)$. Then 
  the scale-measure  $(\psi,\Tcon)\in \Sh(\context)$ given by
  \[\psi = \id_G\quad \text{ and }\quad \Tcon = \app\limits_{A\in\sigma^{-1}(\Ext(\Scon))} (G,\{\phi =
    \wedge\ A^{I}\},I_{\phi}) \] is equivalent to $(\sigma,\Scon)$ and
  is called \emph{conjunctive normalform of} $(\sigma,\Scon)$.
  \end{proposition}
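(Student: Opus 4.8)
The plan is to show that the apposition context $\Tcon$ defined in the statement has exactly the extent set $\sigma^{-1}(\Ext(\Scon))$, from which equivalence with $(\sigma,\Scon)$ follows immediately by \cref{def:sm-refine} (both reflect the same family of extents of $\context$). Since $\psi=\id_G$ is prescribed, I first need that $(\psi,\Tcon)$ is genuinely a scale-measure, and then that $\id_G^{-1}(\Ext(\Tcon))=\Ext(\Tcon)=\sigma^{-1}(\Ext(\Scon))$.

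First I would handle each conjunctive conjunct separately. For a fixed extent $A\in\sigma^{-1}(\Ext(\Scon))$, the conjunction $\phi=\wedge\,A^{I}$ is a formula in $\mathcal{L}(M,\{\wedge\})$ whose variable set is $\Var(\phi)=A^I$. By \cref{lem:deri}(i) its derivation satisfies $\{\phi\}^{I_\phi}=\Var(\phi)^I=A^{II}$. Now $A\in\sigma^{-1}(\Ext(\Scon))\subseteq\Ext(\context)$ by \cref{def:sm}, so $A$ is an extent and hence $A^{II}=A$; thus $\{\phi\}^{I_\phi}=A$. Since $A\in\Ext(\context)$, \cref{prop:logiattr} (equivalently \cref{prop:clattr}) gives that $\id_G$ is a $(G,\{\phi\},I_\phi)$-measure of $\context$, and the single nontrivial extent of this one-attribute context is exactly $A$.

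Next I would assemble these conjuncts via apposition. By \cref{prop:app}, the apposition $\Tcon=\app_{A\in\sigma^{-1}(\Ext(\Scon))}(G,\{\phi=\wedge\,A^I\},I_\phi)$ of scale-measures is again a scale-measure of $\context$, so $(\psi,\Tcon)=(\id_G,\Tcon)\in\Sh(\context)$ as claimed. For the extent set, I use the stated fact that the extents of an apposition are precisely all intersections of extents of the apposed factors. Each factor contributes the extents $\{A,G\}$, so $\Ext(\Tcon)$ is the closure-under-intersection of the family $\{A\mid A\in\sigma^{-1}(\Ext(\Scon))\}$ (the top element $G$ arising as the empty intersection). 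Because $\sigma^{-1}(\Ext(\Scon))$ is itself a closure system on $G$ (it is $\Ext(\context_{\sigma^{-1}(\Ext(\Scon))})$ by \cref{lem:csctx}, equivalently the extents reflected by a scale-measure, which is closed under intersection and contains $G$), this family is already closed under intersection, whence $\Ext(\Tcon)=\sigma^{-1}(\Ext(\Scon))$.

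Finally, since $\psi=\id_G$ is surjective, $\psi^{-1}(\Ext(\Tcon))=\Ext(\Tcon)=\sigma^{-1}(\Ext(\Scon))$, so $(\psi,\Tcon)$ reflects exactly the same extents of $\context$ as $(\sigma,\Scon)$; by \cref{def:sm-refine} the two are mutually coarser and finer, i.e.\ $(\sigma,\Scon)\sim(\psi,\Tcon)$. \textbf{The main obstacle} I expect is bookkeeping at the two boundary cases: verifying that $G$ itself is correctly produced (as the empty intersection of extents, so that the top of $\Ext(\Tcon)$ matches the top $G$ of $\sigma^{-1}(\Ext(\Scon))$), and confirming that no \emph{extra} extents sneak in beyond $\sigma^{-1}(\Ext(\Scon))$ — which is exactly where closure-under-intersection of the reflected-extent family is essential. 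I would also note the degenerate requirement $|\Var(\phi)|>0$: if some reflected extent $A$ has empty intent $A^I=\emptyset$ (which happens precisely when $A=G$), the corresponding conjunct is vacuous, and one simply drops it from the apposition without affecting the intersection closure.
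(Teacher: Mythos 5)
Your proof is correct and follows essentially the same route as the paper's: establish each conjunct $(G,\{\wedge A^I\},I_\phi)$ as a scale-measure via \cref{prop:clattr}, combine them with \cref{prop:app}, and use the fact that apposition extents are intersections together with the intersection-closedness of $\sigma^{-1}(\Ext(\Scon))$ to identify $\Ext(\Tcon)$. Your version is in fact slightly more careful than the paper's, since you make explicit that $\{\phi\}^{I_\phi}=A^{II}=A$ requires $A\in\Ext(\context)$, and you handle the boundary cases ($G$ as the empty intersection, and the vacuous conjunct when $A^I=\emptyset$ --- though note that for an extent $A$ this forces $A=G$ but is not equivalent to it) that the paper glosses over.
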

  \begin{proof}
    We know that every formal context $(G,\{\phi=\wedge
    A^{I}\},I_{\phi})$ together with $\id_{G}$ is a scale-measure
    (\cref{prop:clattr}). Moreover, every apposition of scale-measures
    (for some formal context $\context$) is again a scale-measure
    (\cref{prop:app}). Hence, the resulting $(\psi,\Tcon)$ is a
    scale-measure of $\context$.

    It remains to be shown that $\sigma^{-1}(\Ext(\Scon)) =
    \id_G(\Ext(\Tcon))$.  Scale-measure equivalence holds if
    $(\psi,\Tcon)$ reflects the same set of extents in
    $\Ext(\context)$ as $(\sigma,\Scon)$, thus if Each $(G,\{\phi =
    \wedge A^{I}\},I_{\phi})$ has the extent set $\{G,(\wedge
    A^{I})^{I_\phi}\}$. In this set we find that $(\wedge
    A^{I})^{I_\phi}=A$ by \cref{lem:deri}. Due to the apposition
    property the resulting context has the intersections of all
    subsets of $\sigma^{-1}(\Ext(\Scon))$ as extents. This set is
    closed under intersection. Therefor, $\sigma^{-1}(\Ext(\Scon)) =
    \id_G(\Ext(\Tcon))$. \qed
\end{proof}

The conjunctive normalform $(\psi,\Tcon)$ of a scale-measure
$(\sigma,\Scon)$ may constitute a more human-accessible representation
of the same scaling information. To demonstrate this in a more
practical manner we applied our method to the well-known \emph{Zoo}
data set by R.\,S.\,Forsyth, which we obtained from the UCI
repository~\cite{zoods}. For this we computed a canonical scale-measure
(\cref{lem:cssm}), for which we computed an equivalent scale-measure
(\cref{fig:zoo}) according to \cref{lem:appconst}. In the presented
example we see that the intent of animal taxons emerge naturally,
which are indicated using red colored names in~\cref{fig:zoo},
(instead of extents as used by the canonical representation).

\subsection{Order Dimension of Scale-measures}
An important property of formal contexts, and therefore of
scale-measures, is the order dimension (\cref{def:orddim}). We
already motivated their investigation with respect to our running
example, specifically the decrease of dimension
(\cref{bjicemeasure}). The substantiate formally our experimental
finding we investigate the correspondence between order dimension and
scale-hierarchies. For this we employ the Ferrers dimension of
contexts, which is equal to their order dimension~\cite[Theorem
46]{fca-book}.  A \emph{Ferrers relation} is a binary relation
$F\subseteq G\times M$ such that for $(g,m),(h,n)\in F$ it holds that
$(g,n)\not\in F \Rightarrow (h,m)\in F$. The \emph{Ferrers dimension}
of the formal context $\context$ is equal to the minimum number of
ferrers relations $F_t\subseteq G\times M, t\in T$ such that
$I=\bigcap_{t\in T} F_t$.

\begin{proposition}\label{prop:dim}
  For a context $\context$ and scale-measures
  $(\sigma,\Scon),(\psi,\Tcon)\in \SH(\context)$ with
  $(\sigma,\Scon)\leq (\psi,\Tcon)$, where $\sigma$ and $\psi$ are
  surjective, it holds that $\dim(\Scon)\leq \dim(\Tcon)$.
\end{proposition}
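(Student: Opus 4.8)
The plan is to translate the claim, which is phrased in terms of the scale contexts $\Scon$ and $\Tcon$, into a statement about their reflected extent systems inside $\Ext(\context)$, and then to exploit that these two systems are nested by the refinement order.

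First I would pass from the contexts to their extent posets. The concept lattice $\BV(\Scon)$ is order isomorphic to $(\Ext(\Scon),\subseteq)$ via the extent projection $(A,B)\mapsto A$, and the order dimension of \cref{def:orddim} is an invariant of the ordered set, so $\dim(\Scon)=\dim(\Ext(\Scon),\subseteq)$ and likewise $\dim(\Tcon)=\dim(\Ext(\Tcon),\subseteq)$. Next, because $\sigma$ is surjective, the preimage map $\sigma^{-1}$ is injective on subsets of $G_\Scon$, and by the argument underlying \cref{lem:eqiso} (building on~\cite[Proposition 118]{fca-book}) it is a $\wedge$-preserving order embedding; hence $\sigma^{-1}$ is an order isomorphism from $(\Ext(\Scon),\subseteq)$ onto $(\sigma^{-1}(\Ext(\Scon)),\subseteq)$, and the same holds for $\psi^{-1}$. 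Consequently $\dim(\Scon)=\dim(\sigma^{-1}(\Ext(\Scon)),\subseteq)$ and $\dim(\Tcon)=\dim(\psi^{-1}(\Ext(\Tcon)),\subseteq)$, where both systems are sub-closure systems of $\Ext(\context)$ ordered by inclusion.

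Now I would invoke the refinement order. By \cref{def:sm-refine}, the hypothesis $(\sigma,\Scon)\leq(\psi,\Tcon)$ means precisely $\sigma^{-1}(\Ext(\Scon))\subseteq\psi^{-1}(\Ext(\Tcon))$, so $(\sigma^{-1}(\Ext(\Scon)),\subseteq)$ is an induced subposet of $(\psi^{-1}(\Ext(\Tcon)),\subseteq)$. The remaining ingredient is the monotonicity of order dimension under induced subposets: if $P$ carries the order restricted from $Q$, then $\dim(P)\leq\dim(Q)$. This is immediate from \cref{def:orddim}, since an embedding of $Q$ into a direct product of $\dim(Q)$ chains restricts to an embedding of $P$ into the same product, witnessing $\dim(P)\leq\dim(Q)$. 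Chaining the identities and this inequality yields $\dim(\Scon)\leq\dim(\Tcon)$.

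The only genuinely delicate point is the reduction step: one must check that it is surjectivity that upgrades $\sigma^{-1}$ from a mere $\wedge$-preserving injection into $\Ext(\context)$ to an order isomorphism onto the reflected extents, so that the order dimension is transported faithfully; without surjectivity two distinct extents of $\Scon$ could share a preimage and the identification would break. The subposet-monotonicity of order dimension is standard and could alternatively be recast through the Ferrers dimension discussed above, but the realizer-restriction argument is the most direct route.
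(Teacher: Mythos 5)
Your proof is correct and follows essentially the same route as the paper: both reduce the claim to the nested reflected extent systems $\sigma^{-1}(\Ext(\Scon))\subseteq\psi^{-1}(\Ext(\Tcon))$ (with surjectivity transporting the dimension along the equivalence to the canonical representation) and then restrict a dimension witness for the larger structure to the smaller one. The only difference is cosmetic: the paper restricts a Ferrers realizer of $\context_{\psi^{-1}(\Ext(\Tcon))}$ to the smaller attribute set, whereas you restrict a chain-product embedding of the larger extent poset to the induced subposet --- equivalent moves, since the Ferrers dimension coincides with the order dimension.
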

\begin{proof}
  We know that $(\sigma,\Scon)$ has the canonical representation
  $(\id_G,\context_{\sigma^{-1}(\Ext(\Scon))})$,
  cf.~\cref{prop:eqi-scale}, and the same is true for $(\psi,\Tcon)$.
  Since $(\sigma,\Scon)\leq (\psi,\Tcon)$ it holds that
  $\sigma^{-1}(\Ext(\Scon))\subseteq \psi^{-1}(\Ext(\Tcon))$ and the
  scale $\context_{\psi^{-1}(\Ext(\Tcon))}$ restricted to the set
  $\sigma^{-1}(\Ext(\Scon))$ as attributes is equal to
  $\context_{\sigma^{-1}(\Ext(\Scon))}$. Hence, a Ferrers set $F_{T}$
  such that $\bigcap_{t\in T}F_{t}$ is equal to the incidence of
  $\context_{\psi^{-1}(\Ext(\Tcon))}$, can be restricted to the
  attribute set $\sigma^{-1}(\Ext(\Scon))$ and is then equal to the
  incidence of $\context_{\sigma^{-1}(\Ext(\Scon))}$. Thus, as
  required, $\dim(\context_{\sigma^{-1}(\Ext(\Scon))}) \leq
  \dim(\context_{\psi^{-1}(\Ext(\Tcon))})$.\qed
\end{proof}

Building up on this result we can provide an upper bound for the
dimension of apposition of scale-measures for some formal context
$\context$.

\begin{proposition}
  For a context $\context$ and scale-measures
  $(\sigma,\Scon),(\psi,\Tcon)\in \SH(\context)$ with
  $(\sigma,\Scon)\app (\psi, \Tcon){=}(\delta, \Ocon)$. Then order dim.\
  of $\Ocon$ is bound by $dim(\Ocon){\leq}\dim(\Scon){+}\dim(\Tcon)$.
\end{proposition}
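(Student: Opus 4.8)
The plan is to reduce the statement to a single subadditivity fact about appositions of formal contexts and then prove that fact with Ferrers relations, exploiting that Ferrers dimension equals order dimension (Theorem~46 in~\cite{fca-book}). Concretely, I would first dispose of the two branches of~\cref{def:app}. In the branch $G_\Scon=G_\Tcon,\ \sigma=\psi$ we have $\Ocon=\Scon\app\Tcon$ on the common object set, so the claim is exactly the subadditivity bound applied to $\Scon$ and $\Tcon$. In the else branch, $(\delta,\Ocon)$ is the join $(\sigma,\Scon)\vee(\psi,\Tcon)$, which by~\cref{prop:lattice} is equivalent to $(\id,\context_{\mathcal{C}})$ with $\mathcal{C}=\{A\cap B\mid A\in\sigma^{-1}(\Ext(\Scon)),\,B\in\psi^{-1}(\Ext(\Tcon))\}$. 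Since this is precisely the extent set of the apposition $\context_{\sigma^{-1}(\Ext(\Scon))}\app\context_{\psi^{-1}(\Ext(\Tcon))}$, and since order dimension only depends on the concept lattice, the else branch also reduces to the subadditivity bound, now applied to the canonical representations. Here I would invoke that order dimension is invariant under scale-measure equivalence: by~\cref{prop:eqi-scale} together with~\cref{prop:dim} (used in both directions for surjective maps, cf.\ the standing surjectivity convention after~\cref{lem:eqiso}), $\dim(\Scon)=\dim(\context_{\sigma^{-1}(\Ext(\Scon))})$ and likewise for $\Tcon$ and for $\Ocon$.

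The heart of the proof is the lemma that $\dim(\context_1\app\context_2)\le\dim(\context_1)+\dim(\context_2)$ whenever $G_1=G_2=G$. I would fix Ferrers realizers $F_1,\dots,F_s\subseteq G\times M_1$ with $I_1=\bigcap_i F_i$ and $s=\dim(\context_1)$, and $E_1,\dots,E_t\subseteq G\times M_2$ with $I_2=\bigcap_j E_j$ and $t=\dim(\context_2)$. On the combined attribute set $M_1\sqcup M_2$ I extend each realizer by filling the foreign columns completely: $\hat F_i\coloneqq F_i\cup(G\times M_2)$ and $\hat E_j\coloneqq E_j\cup(G\times M_1)$. A short check against the definition shows each $\hat F_i$ is again Ferrers: if $(g,m),(h,n)\in\hat F_i$ with $(g,n)\notin\hat F_i$, then necessarily $n\in M_1$ and $(h,n)\in F_i$; if $m\in M_2$ then $(h,m)\in\hat F_i$ trivially, and if $m\in M_1$ the Ferrers property of $F_i$ yields $(h,m)\in F_i$. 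Computing columnwise, $\bigcap_i\hat F_i=I_1\cup(G\times M_2)$ and $\bigcap_j\hat E_j=(G\times M_1)\cup I_2$, so their common intersection is $I_1\cup I_2=I_{\context_1\app\context_2}$. This exhibits the incidence of the apposition as an intersection of $s+t$ Ferrers relations, giving the bound.

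I expect the main obstacle to be the two bookkeeping verifications rather than any deep idea: checking that the column-filled extensions $\hat F_i,\hat E_j$ really remain Ferrers, and that their total intersection recovers $I_1\cup I_2$ exactly on both blocks $G\times M_1$ and $G\times M_2$ (the disjointness $M_1\cap M_2=\emptyset$ is what makes both computations clean). A secondary point requiring care is the equivalence-invariance of order dimension used in the reduction, where one must either assume the standing surjectivity convention or argue directly that $\sigma^{-1}$ restricts to an order isomorphism $(\Ext(\Scon),\subseteq)\to(\sigma^{-1}(\Ext(\Scon)),\subseteq)$ (cf.\ the order embedding from~\cite[Proposition 118]{fca-book} cited in~\cref{lem:eqiso}), so that the concept lattices, and hence the dimensions, coincide. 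As a sanity cross-check, the same bound follows conceptually from the fact that $\BV(\context_1\app\context_2)$ embeds into the direct product $\BV(\context_1)\times\BV(\context_2)$ and that order dimension is subadditive over products and monotone under suborders; the Ferrers route above is preferred here since it matches the machinery already set up for~\cref{prop:dim}.
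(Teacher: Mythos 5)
Your proposal is correct and follows essentially the same route as the paper: reduce to the canonical representations and then realize $I_{\Scon\app\Tcon}$ as the intersection of the $|S|+|T|$ Ferrers relations obtained by filling the foreign attribute columns, i.e.\ $F\cup(G\times M_{\Tcon})$ and $E\cup(G\times M_{\Scon})$. You merely make explicit the bookkeeping the paper leaves implicit (the two branches of \cref{def:app}, the verification that the column-filled relations remain Ferrers, and the invariance of order dimension under scale-measure equivalence), which is a welcome but not substantively different elaboration.
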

\begin{proof}
  Without loss of generality we consider for all scale-measures their
  canonical representation, only. Let $F_T$ be a Ferrers set of the
  formal context $\Tcon$ such that $\bigcap_{t\in T}F_{t}=I_{\Tcon}$
  and similarly $\bigcap_{s\in S} F_s=I_{\Scon}$.  For any Ferrers
  relation $F$ of $\Scon$ it follows that $F\cup (G\times M_{\Tcon})$
  is a Ferrers relation of $\Scon\,\app\Tcon$. Hence, the intersection
  of $\bigcap_{s\in S} F_s\cup (G\times M_\Tcon)$ and $\bigcap_{t\in
    T} F_t\cup (G\times M_\Scon)$ is a Ferrers set and is equal to
  $I_{\, \Scon\,\app\Tcon}$. Since this construction does neither change the
  cardinality of index set $T$ nor the index set $S$, the required
  inequality follows. \qed
\end{proof}

\section{Implications for Data Set Scaling}
We revisit the running example $\context_{\text{BJ}}$ (\cref{bjice}) and want
to outline a semi-automatically procedure to obtain a human-meaningful
scale-measure from it, as depicted in~\cref{bjicemeasure}, based on
the insights from~\cref{sec:methods}. In this example, we derive new
attributes $M_{\Tcon}\subseteq \mathcal{L}(M,\{\wedge,\vee,\neg\})$ from
the original attribute set $M$ of $\context_{\text{BJ}}$ using background
knowledge. This process results in
\begin{align*}
M_{\Tcon}=\{&\textbf{Choco}=\texttt{Choco Ice}\vee\texttt{Choco
              Pieces},\\
  &\textbf{Caramel}=\texttt{Caramel
    Ice}\vee\texttt{Caramel},\\
  &\textbf{Peanut}=\texttt{Peanut
    Ice}\vee\texttt{Peanut Butter},\\
  &\textbf{Brownie, Dough, Vanilla}\}
\end{align*}

Such propositional features can be bear various meanings, in our
example we interpret $M_{\Tcon}$ as taste attributes (as opposed to
ingredients). Another possible set $M_{\Tcon}$ could represent
ingredient mixtures ($\wedge$) to generate a recipe view on the
presented ice creams. From $M_{\Tcon}$ we can now derive
semi-automatically a scale-measure (\cref{prop:app,prop:attr}) if it
exists (\cref{cor:decide}).

\subsection{Scaling of Larger Data Set}
To demonstrate the benefits of the scale-measure navigation on a
larger data set, we evaluate our method on a data set that related
spices to dishes~\cite{pqcores,herbs}. We decided for another food
related data set, since we assume that this knowledge domain is easily
to grasp. Specifically, the data set is comprised of 56 dishes as
objects and 37 spices as their attributes, and the resulting context
is in the following denoted by $\context_{\text{Spices}}$. The dishes
in the data set are picked from multiple categories, such as
vegetables, meat, or fish dishes. The incidence
$I_{\context_{\text{Spices}}}$ indicates that a spice $m$ is necessary
to cook a dish $g$. The concept lattice of $\context_{\text{Spices}}$
has 421 concepts and is therefore too large for a meaningful human
comprehension. Thus, using scale-measures through our methods, we are
able to generate two small-scaled views of readable size. Both scales,
as depicted in~\cref{fig:gewscale}, measure the dishes in terms of
spice mixtures $M_{\Tcon}\subseteq\mathcal{L}(M,\{\wedge\})$. For the
conjunction of spices we transformed intent sets
$B\in\Int(\context_{\text{Spices}})$ to propositional formulas
$\bigwedge_{m\in B} m$. However, in order to retrieve a small scale
context we decided for using intents with high support, only, i.e.,
$B'/G$ is high with respect to some selection criterion. We employed
two different selection criteria: A) high support in all dishes; B)
high support in meat dishes. Afterwards we derive semi-automatically
two scale-measures (\cref{prop:app,prop:attr}).  Both scale-measures
include five spices mixtures. The concept lattice for the scale
context of A) is depicted in~\cref{fig:gewscale} (bottom), and for B)
in~\cref{fig:gewscale} (top). We named all selected intent sets to
make them more easily addressable. Both scales can be used to identify
similar flavored dishes, e.g., a menu such as \emph{deer} in
combination with \emph{red cabbage}, which share the \emph{bay leaf
  mix}. Based on the scale-measures one might be interested to further
navigate in the scale-hierarchy by adding additional spice mixtures
(\cref{prop:app}), or employing other selection criterion, which
result in different views on the data set $\context_{\text{Spices}}$,
e.g., vegetarian. 

Finally, we may point out that in contrast to feature compression
techniques, such as LSA (which use linear combinations of attributes),
the scale-measure attributes are directly interpretable by the
semantics of propositional logics on the  original data set attributes.

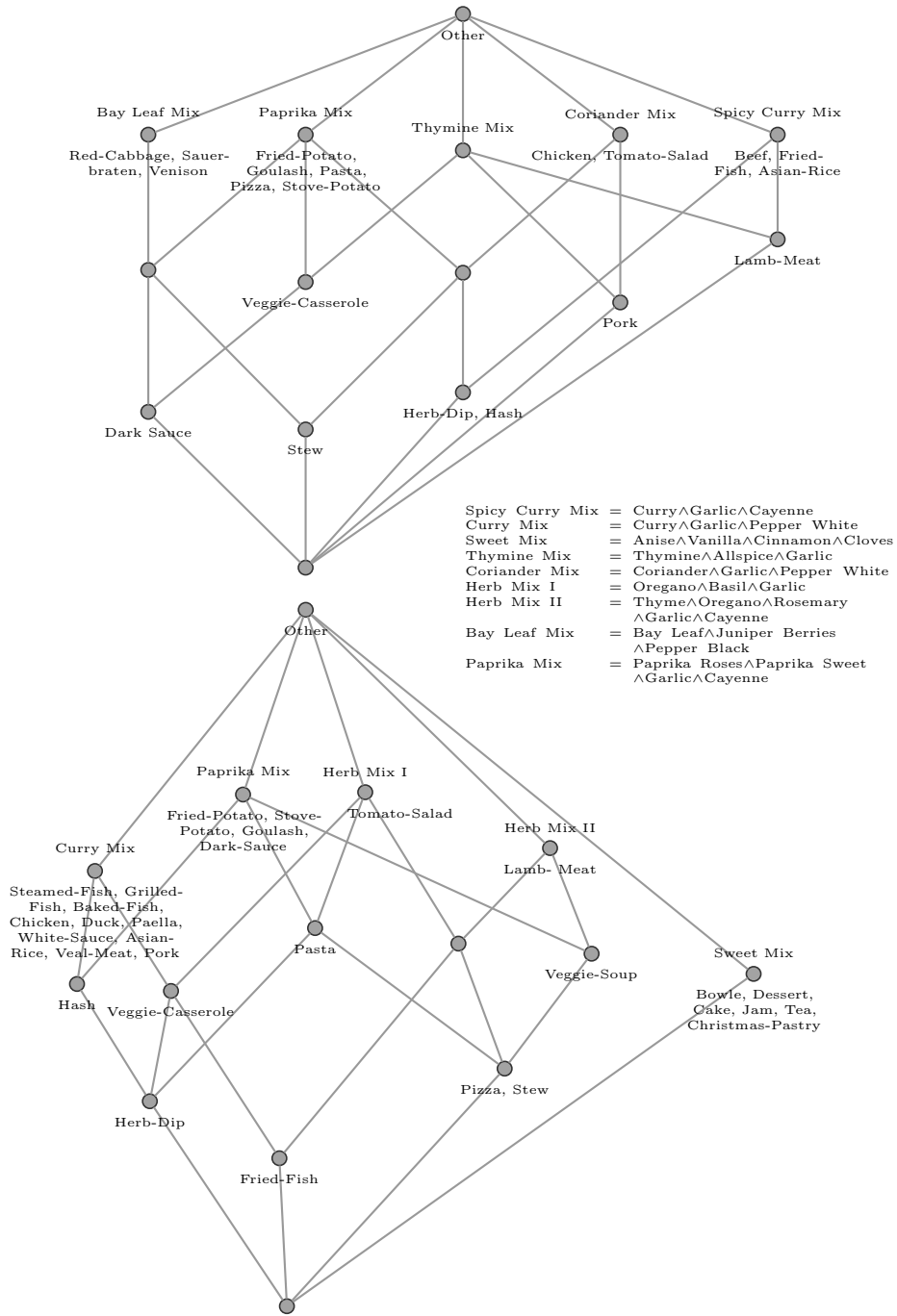
\begin{figure}
\hspace{-1cm}\begin{tikzpicture}[]
  \node at (0,2.2) {\scalebox{1}{\colorlet{mivertexcolor}{black!80}
\colorlet{jivertexcolor}{black!80}
\colorlet{vertexcolor}{black!80}
\colorlet{bordercolor}{black!80}
\colorlet{linecolor}{gray!80}
\tikzset{vertexbase/.style={semithick, shape=circle, inner sep=2pt, outer sep=0pt, draw=bordercolor},%
  vertex/.style={vertexbase, fill=vertexcolor!45},%
  mivertex/.style={vertexbase, fill=mivertexcolor!45},%
  jivertex/.style={vertexbase, fill=jivertexcolor!45},%
  divertex/.style={vertexbase, top color=mivertexcolor!45, bottom color=jivertexcolor!45},%
  conn/.style={-, thick, color=linecolor}%
}
\tikzstyle{s} = [text width=2.5cm,align=center,label distance=1mm]
\begin{tikzpicture}[scale=0.43,font=\tiny]
  \begin{scope} 
    \begin{scope} 
      \foreach \nodename/\nodetype/\xpos/\ypos in {%
        0/vertex/-4/3.1189189189189186,
        1/jivertex/-4/7.548648648648651,
        2/jivertex/-9/8.113513513513514,
        3/jivertex/1/8.737837837837837,
        4/jivertex/6/11.621621621621621,
        5/jivertex/-4/12.275675675675675,
        6/vertex/1/12.572972972972973,
        7/vertex/-9/12.662162162162165,
        8/jivertex/11/13.643243243243244,
        9/mivertex/1/16.4972972972973,
        10/divertex/-9/17,
        11/mivertex/11/17,
        12/mivertex/6/17,
        13/mivertex/-4/17,
        14/vertex/1/20.86756756756757
      } \node[\nodetype] (\nodename) at (\xpos, \ypos) {};
    \end{scope}
    \begin{scope} 
      \path (1) edge[conn] (7);
      \path (2) edge[conn] (7);
      \path (13) edge[conn] (14);
      \path (11) edge[conn] (14);
      \path (5) edge[conn] (9);
      \path (3) edge[conn] (6);
      \path (10) edge[conn] (14);
      \path (5) edge[conn] (13);
      \path (6) edge[conn] (12);
      \path (0) edge[conn] (8);
      \path (7) edge[conn] (13);
      \path (0) edge[conn] (4);
      \path (0) edge[conn] (3);
      \path (8) edge[conn] (11);
      \path (1) edge[conn] (6);
      \path (8) edge[conn] (9);
      \path (4) edge[conn] (12);
      \path (6) edge[conn] (13);
      \path (0) edge[conn] (1);
      \path (7) edge[conn] (10);
      \path (9) edge[conn] (14);
      \path (0) edge[conn] (2);
      \path (4) edge[conn] (9);
      \path (3) edge[conn] (11);
      \path (2) edge[conn] (5);
      \path (12) edge[conn] (14);
    \end{scope}
    \begin{scope} 
      \foreach \nodename/\labelpos/\labelopts/\labelcontent in {%
        1/below/s/{Stew},
        2/below/s/{Dark Sauce},
        3/below/s/{Herb-Dip, Hash},
        4/below/s/{Pork},
        5/below/s/{Veggie-Casserole},
        8/below/s/{Lamb-Meat},
        9/above/s/{Thymine Mix},
        10/below/s/{Red-Cabbage, Sauerbraten, Venison},
        10/above/s/{Bay Leaf Mix},
        11/below/s/{Beef, Fried-Fish, Asian-Rice},
        11/above/s/{Spicy Curry Mix},
        12/below/s/{Chicken, Tomato-Salad},
        12/above/s/{Coriander Mix},
        13/below/s/{Fried-Potato, Goulash, Pasta, Pizza, Stove-Potato},
        13/above/s/{Paprika Mix},
        14/below/s/{Other}
      } \coordinate[label={[\labelopts]\labelpos:{\labelcontent}}](c) at (\nodename);
    \end{scope}
  \end{scope}
\end{tikzpicture}

  \node[text width = 10cm] at (5.2,-2) {\tiny
\setlength{\tabcolsep}{2pt}
    \begin{tabular}{lcl}
      Spicy Curry Mix&  =& Curry$\wedge$Garlic$\wedge$Cayenne \\ 
      Curry Mix& =& Curry$\wedge$Garlic$\wedge$Pepper White\\  
      Sweet Mix& = & Anise$\wedge$Vanilla$\wedge$Cinnamon$\wedge$Cloves\\
      Thymine Mix &= &Thymine$\wedge$Allspice$\wedge$Garlic\\
      Coriander Mix& =& Coriander$\wedge$Garlic$\wedge$Pepper White\\
      Herb Mix I &= &Oregano$\wedge$Basil$\wedge$Garlic\\
      Herb Mix II&=&Thyme$\wedge$Oregano$\wedge$Rosemary\\&&$\wedge$Garlic$\wedge$Cayenne\\
Bay Leaf Mix& = &Bay Leaf$\wedge$Juniper Berries\\&&$\wedge$Pepper
                   Black\\
Paprika Mix& = & 
       Paprika Roses$\wedge$Paprika Sweet\\&&$\wedge$Garlic$\wedge$Cayenne
    \end{tabular}
  };
  \node at (-0.65,-7) {\scalebox{1}{\colorlet{mivertexcolor}{black!80}
\colorlet{jivertexcolor}{black!80}
\colorlet{vertexcolor}{black!80}
\colorlet{bordercolor}{black!80}
\colorlet{linecolor}{gray!80}
\tikzset{vertexbase/.style={semithick, shape=circle, inner sep=2pt, outer sep=0pt, draw=bordercolor},%
  vertex/.style={vertexbase, fill=vertexcolor!45},%
  mivertex/.style={vertexbase, fill=mivertexcolor!45},%
  jivertex/.style={vertexbase, fill=jivertexcolor!45},%
  divertex/.style={vertexbase, top color=mivertexcolor!45, bottom color=jivertexcolor!45},%
  conn/.style={-, thick, color=linecolor}%
}
\tikzstyle{s} = [text width=2.5cm,align=center,label distance=1mm]
\begin{tikzpicture}[scale=0.36,font=\tiny]
  \begin{scope} 
    \begin{scope} 
      \foreach \nodename/\nodetype/\xpos/\ypos in {%
        0/vertex/-0.7148648648648681/1.3351351351351362,
        1/jivertex/-1/7,
        2/jivertex/-5.91756756756757/9.183783783783785,
        3/jivertex/7.549999999999997/10.432432432432435,
        4/vertex/-5.11486486486487/13.405405405405407,
        5/jivertex/-8.682432432432435/13.672972972972975,
        6/divertex/17/14.059459459459461,
        7/jivertex/10.849999999999994/14.832432432432434,
        8/vertex/5.795945945945942/15.21891891891892,
        9/vertex/0.3554054054054028/15.813513513513515,
        10/mivertex/-8/18,
        11/mivertex/9.274324324324319/18.875675675675677,
        12/mivertex/-2.379729729729732/20.92702702702703,
        13/mivertex/2.258108108108104/21.01621621621622,
        14/vertex/0/28
      } \node[\nodetype] (\nodename) at (\xpos, \ypos) {};
    \end{scope}
    \begin{scope} 
      \path (2) edge[conn] (9);
      \path (12) edge[conn] (14);
      \path (3) edge[conn] (8);
      \path (0) edge[conn] (3);
      \path (10) edge[conn] (14);
      \path (2) edge[conn] (5);
      \path (4) edge[conn] (13);
      \path (9) edge[conn] (13);
      \path (11) edge[conn] (14);
      \path (9) edge[conn] (12);
      \path (1) edge[conn] (4);
      \path (7) edge[conn] (11);
      \path (0) edge[conn] (6);
      \path (5) edge[conn] (12);
      \path (0) edge[conn] (2);
      \path (2) edge[conn] (4);
      \path (8) edge[conn] (13);
      \path (1) edge[conn] (8);
      \path (6) edge[conn] (14);
      \path (0) edge[conn] (1);
      \path (4) edge[conn] (10);
      \path (5) edge[conn] (10);
      \path (3) edge[conn] (7);
      \path (3) edge[conn] (9);
      \path (13) edge[conn] (14);
      \path (8) edge[conn] (11);
      \path (7) edge[conn] (12);
    \end{scope}
    \begin{scope} 
      \foreach \nodename/\labelpos/\labelopts/\labelcontent in {%
        1/below/s/{Fried-Fish},
        2/below/s/{Herb-Dip},
        3/below/s/{Pizza, Stew},
        4/below/s/{Veggie-Casserole},
        5/below/s/{Hash},
        6/below/s/{Bowle, Dessert, Cake, Jam, Tea, Christmas-Pastry},
        6/above/s/{Sweet Mix},
        7/below/s/{Veggie-Soup},
        9/below/s/{Pasta},
        10/below/s/{Steamed-Fish, Grilled-Fish, Baked-Fish, Chicken, Duck, Paella, White-Sauce, Asian-Rice, Veal-Meat, Pork},
        10/above/s/{Curry Mix},
        11/below/s/{Lamb- Meat},
        11/above/s/{Herb Mix II},
        12/below/s/{Fried-Potato, Stove-Potato, Goulash, Dark-Sauce},
        12/above/s/{Paprika Mix},
        13/below/s/{\phantom{aaaaaaaa}Tomato-Salad},
        13/above/s/{Herb Mix I},
        14/below/s/{Other}
      } \coordinate[label={[\labelopts]\labelpos:{\labelcontent}}](c) at (\nodename);
    \end{scope}
  \end{scope}
\end{tikzpicture}

\end{tikzpicture}
\caption{In this figure, we display the concept lattices of two scale
  contexts for which the identity map is a scale-measures of the
  spices context. The attributes of the scales are spice mixtures
  generated by propositional logic. By \emph{Other} we identify all
  objects in the top concept for better readability. \\}
  \label{fig:gewscale}
\end{figure}

\section{Related Work}
Measurement is an important field of study in many (scientific)
disciplines that involve the collection and analysis of
data. According to \citeauthor{stevens1946theory}
\cite{stevens1946theory} there are four feature categories that can be
measured, i.e. \emph{nominal}, \emph{ordinal}, \emph{interval} and
\emph{ratio} features. Although there are multiple extensions and
re-categorizations of the original four categories, e.g., most
recently~\citeauthor{Chrisman} introduced ten \cite{Chrisman}, for the
purpose of our work the original four suffice. Each of these
categories describe which operations are supported per feature
category. In the realm of formal concept analysis we work often with
\emph{nominal} and \emph{ordinal} features, supporting value
comparisons by $=$ and $<,>$. Hence grades of detail/membership
cannot be expressed. A framework to describe and analyze the
measurement for Boolean data sets has been introduced in
\cite{cmeasure} and \cite{scaling}, called \emph{scale-measures}. It
characterizes the measurement based on object clusters that are formed
according to common feature (attribute) value combinations. An
accompanied notion of dependency has been studied \cite{manydep},
which led to attribute selection based measurements of boolean
data. The formalism includes a notion of consistency enabling the
determination of different views and abstractions, called
\emph{scales}, to the data set. This approach is comparable to
\emph{OLAP}~\cite{olap} for databases, but on a conceptual
level. Similar to the feature dependency study is an approach for
selecting relevant attributes in contexts based on a mix of lattice
structural features and entropy
maximization~\cite{DBLP:conf/iccs/HanikaKS19}. All discussed
abstractions reduce the complexity of the data, making it easier to
understand by humans.

Despite the in this work demonstrated expressiveness of the
scale-measure framework, it is so far insufficiently studied in the
literature.  In particular algorithmical and practical calculation
approaches are missing. Comparable and popular machine learning
approaches, such as feature compressed techniques, e.g., \emph{Latent
  Semantic Analysis} \cite{lsa,lsaapp}, have the disadvantage that the
newly compressed features are not interpretable by means of the
original data and are not guaranteed to be consistent with said
original data. The methods presented in this paper do not have these
disadvantages, as they are based on meaningful and interpretable
features with respect to the original features using propositional
expressions. In particular preserving consistency, as we did, is not a
given, which was explicitly investigated in the realm scaling
many-valued formal contexts~\cite{logiscale} and implicitly studied for generalized
attributes~\cite{leonardOps}.

Earlier approaches to use scale contexts for complexity reduction in
data used constructs such as $(G_N\subseteq \mathcal{P}(N),N,\ni)$ for
a formal context $\context=(G,M,I)$ with $N\subseteq M$ and the
restriction that at least all intents of $\context$ restricted to $N$
are also intent in the scale~\cite{stumme99hierarchies}. Hence, the
size of the scale context concept lattice depends directly on the size
of the concept lattice of $\context$. This is particularly infeasible
if the number of intents is exponential, leading to incomprehensible
scale lattices. This is in contrast to the notion of scale-measures,
which cover at most the extents of the original context, and can
thereby display selected and interesting object dependencies of
scalable size.

\section{Conclusion}
Our work has broadened the understanding of the data scaling process
and has paved the way for the development of novel scaling algorithms,
in particular for Boolean data, which we summarize under the term
\emph{Exploring Conceptual Measurements}. We build our framework on
the notion of scale-measures, which themselves are interpretations of
formal contexts. By studying and extending the theory on
scale-measures, we found that the set of all possible measurements for
a formal context is lattice ordered, up to equivalence. Thus, this set
is navigable using the lattice's meet and join operations.
Furthermore, we found that the problem of deciding whether for a given
formal context $\context$ and a tuple $(\sigma,\Scon)$ the latter represents a
scale-measure for the former is PTIME with respect to the respective object
and attribute set sizes. All this and the following is based on 
our main result that for a given formal context $\context$ the set of all
scale-measures and the set of all sub-closure systems of
$\BV(\context)$ are isomorphic. 

To ensure our goal for human comprehensible scaling we derived a
propositional logic scaling of formal contexts by transferring and
extending results from conceptual scaling~\cite{logiscale}. With this
approach, we are able to introduce new features that lead to
interpretable scale features in terms of a logical formula and with
respect to the original data set attributes. Moreover, these features
are suitable to create any possible scale measurement of the
data. Finally, we found that the order dimension decreases
monotonously when scale-measures are coarsened, hinting the principal
improved readability of scale-measures in contrast to the original
data set.  We have substantiated our theoretical results with three
exemplary data analyses. In particular we demonstrated that employing
propositional logic on the attribute set enables us to express and
apply meaningful scale features, which improved the human readability
in a natural manner. All methods used throughout this work are
published with the open source software
\texttt{conexp-clj}\cite{conexp}, a research tool for Formal Concept
Analysis.

We identified three different research directions for future work,
which together may lead to an efficient and comprehensible data
scaling framework. First of all, the development of meaningful
criteria for ranking or valuing scale-measures is necessary. Although
our results enable an efficient navigation in the lattice of
scale-measures, it cannot provide a promising direction, except from
decreasing the order dimension. Secondly, efficient algorithms for
computing an initial, well ranked/rated scale-measure and the
subsequent navigation are required. Even though we showed a bound for
the computational run time complexity, we assume that this can still
be improved. Thirdly, a natural approach for decreasing the
computational cost of navigating conceptual measurements would be to
employ a set of minimal closure generators instead of the closure
system. We speculate that our results hold in this case. Yet, it is an
open questions if procedures, such as TITANIC~\cite{titanic}, can be
adapted to efficiently navigate the scale-hierarchy of a formal context.

\bibliography{paper}

\newpage
\appendix
\section{Example}
\begin{figure}[h!]
  \centering
  \colorlet{mivertexcolor}{black!80}
\colorlet{jivertexcolor}{black!80}
\colorlet{vertexcolor}{black!80}
\colorlet{bordercolor}{black!80}
\colorlet{linecolor}{gray!80}
\colorlet{highlightline}{gray!90!black}
\tikzset{vertexbase/.style={semithick, shape=circle, inner sep=2pt, outer sep=0pt, draw=bordercolor},%
  vertex/.style={vertexbase, fill=vertexcolor!45},%
  mivertex/.style={vertexbase, fill=mivertexcolor!45},%
  jivertex/.style={vertexbase, fill=jivertexcolor!45},%
  divertex/.style={vertexbase, top color=mivertexcolor!45, bottom color=jivertexcolor!45},%
  conn/.style={-, thick, color=linecolor}%
}
\tikzstyle{s} = [text width=2.7cm,align=center,label distance=0.5mm,rotate=90]
\begin{tikzpicture}[scale=0.27,font=\scriptsize,rotate=90]
  \begin{scope} 
    \begin{scope} 
      \foreach \nodename/\nodetype/\xpos/\ypos in {%
        0/vertex/-14.10696752808886/2.063112932752604,
        1/divertex/-15.18638579520463/9.915624062500974,
        2/jivertex/-16.53128035943854/15.081425960495068,
        3/jivertex/-21.43857319782248/15.515947789631511,
        4/jivertex/-27.02528242957675/17.254035106177284,
        5/jivertex/-0.2711526641756876/17.564407841274743,
        6/jivertex/-12.686062068074094/18.867973328684084,
        7/vertex/-17.900324017711426/20.171538816093413,
        8/jivertex/13.199024039054073/20.543986098210368,
        9/jivertex/-5.361265519774037/23.08904252600954,
        10/jivertex/-23.300809608407228/23.15111707302903,
        11/jivertex/-18.769367675984313/23.29941526110699,
        12/mivertex/-0.20907811715620284/23.771862543223953,
        13/vertex/8.853805747689641/24.640906201496843,
        14/vertex/-14.237925743561398/26.352769876984144,
        15/divertex/22.882653374094843/27.789528116705337,
        16/vertex/3.1429474218963662/29.79309360411468,
        17/mivertex/-5.175041878715554/29.855168151134162,
        18/vertex/-15.231118495873268/30.848360903446036,
        19/jivertex/9.2262530298066/31.59325546767994,
        20/vertex/18.102913253593947/31.903628202777405,
        21/mivertex/-26.28038786534285/33.20719369018673,
        22/mivertex/-21.68687138590044/34.75905736567404,
        23/mivertex/3.7636928920912993/36.99374105837575,
        24/mivertex/11.150563987410841/37.117890152414745,
        25/mivertex/18.351211441671914/37.28262887512204,
        26/vertex/11.416042158274398/44.042686700714795
      } \node[\nodetype] (\nodename) at (\xpos, \ypos) {};
    \end{scope}
    \begin{scope} 
      \path (24) edge[conn,draw=highlightline] (26);
      \path (25) edge[conn,draw=highlightline] (26);
      \path (18) edge[conn] (22);
      \path (6) edge[conn] (12);
      \path (8) edge[conn] (15);
      \path (4) edge[conn] (21);
      \path (9) edge[conn,draw=highlightline] (17);
      \path (12) edge[conn,draw=highlightline] (23);
      \path (15) edge[conn] (20);
      \path (5) edge[conn,draw=highlightline] (13);
      \path (0) edge[conn] (8);
      \path (1) edge[conn] (2);
      \path (23) edge[conn,draw=highlightline] (26);
      \path (10) edge[conn] (21);
      \path (3) edge[conn] (6);
      \path (11) edge[conn] (22);
      \path (6) edge[conn] (14);
      \path (14) edge[conn] (17);
      \path (14) edge[conn] (18);
      \path (5) edge[conn,draw=highlightline] (12);
      \path (16) edge[conn,draw=highlightline] (23);
      \path (0) edge[conn] (1);
      \path (4) edge[conn] (7);
      \path (19) edge[conn,draw=highlightline] (25);
      \path (20) edge[conn,draw=highlightline] (25);
      \path (8) edge[conn] (13);
      \path (7) edge[conn] (20);
      \path (0) edge[conn] (5);
      \path (3) edge[conn] (10);
      \path (18) edge[conn] (23);
      \path (13) edge[conn,draw=highlightline] (19);
      \path (19) edge[conn,draw=highlightline] (23);
      \path (11) edge[conn] (25);
      \path (7) edge[conn] (11);
      \path (2) edge[conn] (7);
      \path (0) edge[conn] (4);
      \path (16) edge[conn,draw=highlightline] (24);
      \path (17) edge[conn,draw=highlightline] (23);
      \path (13) edge[conn,draw=highlightline] (20);
      \path (10) edge[conn] (14);
      \path (0) edge[conn] (3);
      \path (22) edge[conn] (26);
      \path (2) edge[conn] (13);
      \path (0) edge[conn] (9);
      \path (9) edge[conn,draw=highlightline] (16);
      \path (13) edge[conn,draw=highlightline] (16);
      \path (20) edge[conn,draw=highlightline] (24);
      \path (21) edge[conn] (22);
      \path (2) edge[conn] (18);
    \end{scope}
    \begin{scope} 
      \foreach \nodename/\labelpos/\labelopts/\labelcontent in {%
        1/below/s/{frog, toad, newt},
        1/above/s/{{\color{red!80!black}Amphibian}},
        2/below/s/{slowworm, tuatara, pitviper},
        3/below/s/{sealion},
        4/below/s/{OtherFishes},
        4/above/s/{{\color{red!80!black} Fish}},
        5/below/s/{OtherBirds},
        5/above/s/{{\color{red!80!black}Bird}},
        6/below/s/{wallaby, squirrel, {fruitbat}, gorilla, vampire},
        8/below/s/{wasp, gnat, housefly, moth, termite, honeybee, flea, ladybird},
        8/above/s/{{\color{red!80!black}Insect}},
        9/below/s/{platypus},
        10/below/s/{porpoise, seal, dolphin},
        11/below/s/{seasnake},
        12/above/s/{two-legs},
        13/below/s/{slug, tortoise, worm},
        14/below/s/{OtherMammals},
        15/below/s/{lobster, crayfish},
        15/above/s/{six-legs},
        17/above/s/{{\color{red!80!black} Mammal}},
        19/below/s/{scorpion},
        20/below/s/{crab, clam, octopus, seawasp, starfish},
        21/above/s/{fins},
        22/above/s/{predator},
        23/above/s/{backbone},
        24/above/s/{feathers},
        25/above/s/{eggs}
      } \coordinate[label={[\labelopts]\labelpos:{\labelcontent}}](c) at (\nodename);
    \end{scope}
  \end{scope}
  \node[rotate=90] at (13,6) {\setlength{\tabcolsep}{2pt}
    \begin{tabular}{lcl}
    {\color{red!80!black}Bird}&=& eggs $\wedge$ two-legs $\wedge$ feathers \\
    {\color{red!80!black}Fish}&=& eggs $\wedge$ fins \\
    {\color{red!80!black}Insect} &=& airborne $\wedge$ six-legs\\
    {\color{red!80!black}Mammal}&=&backbone $\wedge$ not-eggs\\
    {\color{red!80!black}Amphibian}&=& not-fins $\wedge$ predator $\wedge$ feathers $\wedge$ airborne
    \end{tabular}};
\end{tikzpicture}



  \caption{Concept lattice of a scale-measure of the zoo data set with
    twenty-seven of the original 4579 concepts. Contained objects are
    animals and attributes are characteristics. Newly introduced
    logical attributes are a characterization of animal taxons. The
    objects \emph{girl,frogB} were omitted. We grouped
    {\textbf{OtherFishes}=\{seahorse, sole, herring, piranha, pike,
      chub, haddock, stingray, carp, bass, dogfish, catfish,
      tuna\},\textbf{OtherMammals}=\{reindeer, aardvark, polecat,
      wolf, mole, vole, hare, boar, cavy, antelope, goat, puma,
      mongoose, pony, bear, pussycat, lynx, elephant, calf, mink,
      opossum, leopard, buffalo, lion, giraffe, cheetah, oryx, deer,
      hamster, raccoon\},\textbf{OtherBirds}=\{gull, parakeet, crow,
      skua, swan, hawk, sparrow, lark, wren, dove, vulture, penguin,
      duck, flamingo, pheasant, rhea, ostrich, skimmer, chicken,
      kiwi\} }}
  \label{fig:zoo}
\end{figure}

\end{document}